\documentclass{article}

\PassOptionsToPackage{numbers, compress}{natbib}


\usepackage[preprint]{nips_2018}



\usepackage[utf8]{inputenc} 
\usepackage[T1]{fontenc}    
\usepackage{hyperref}       
\usepackage{url}            
\usepackage{booktabs}       
\usepackage{amsfonts}       
\usepackage{nicefrac}       
\usepackage{microtype}      

\synctex=1
\pdfoutput=1
\usepackage{graphicx}
\usepackage{subfig}
\usepackage{float}
\usepackage{booktabs} 
\usepackage{xspace, amsmath, amssymb, amsthm, bbm, bm}
\usepackage{paralist}
\usepackage{todonotes}
\usepackage{algorithm}
\usepackage{algorithmic}
\usepackage{dsfont}
\allowdisplaybreaks 

\usepackage{etoolbox}
\newcommand{\zerodisplayskips}{%
    \setlength{\abovedisplayskip}{4pt}%
    \setlength{\belowdisplayskip}{4pt}%
    \setlength{\abovedisplayshortskip}{4pt}%
    \setlength{\belowdisplayshortskip}{4pt}}
\appto{\normalsize}{\zerodisplayskips}
\appto{\small}{\zerodisplayskips}
\appto{\footnotesize}{\zerodisplayskips}

\newtheorem{lem}{Lemma}
\newtheorem{thm}{Theorem}
\newtheorem{prop}{Proposition}
\newtheorem{cor}{Corollary}

\newcommand{\eqnref}[1]{Eqn.~\ref{#1}}
\newcommand{\lemref}[1]{Lemma~\ref{#1}}
\newcommand{\corref}[1]{Corollary~\ref{#1}}
\newcommand{\algref}[1]{Algorithm~\ref{#1}}
\newcommand{\thmref}[1]{Theorem~\ref{#1}}
\newcommand{\secref}[1]{Section~\ref{#1}}

\newcommand{\defeq}{:=}
\def\hmu{{\ensuremath{\hat{\mu}}\xspace} }
\newcommand{\EE}{\mathbb{E}}
\newcommand{\VV}{\mathbb{V}}
\newcommand{\PP}{\mathbb{P}}

\def\lt{\left}
\def\rt{\right}
\def\tilN{\ensuremath{\widetilde N}}
\def\eps{\ensuremath{\epsilon}}
\newcommand{\fr}[2]{ { \frac{#1}{#2} }}
\newcommand{\dt}{\delta}
\def\one{\ensuremath{\mathds{1}}} 
\newcommand{\kjunbox}[1]{\vspace{-1ex}{\begin{center}\fbox{%
    \begin{minipage}{.9\textwidth} 
      \centering #1
    \end{minipage}%
  }\end{center}}\vspace{-1ex}}
\makeatletter
\renewcommand{\paragraph}{%
  \@startsection{paragraph}{4}%
  {\z@}{0.20ex \@plus 1ex \@minus .2ex}{-1em}%
  {\normalfont\normalsize\bfseries}%
}
\makeatletter

\title{Adversarial Attacks on Stochastic Bandits}

%

\author{
  Kwang-Sung Jun\\
  UW-Madison\\
  \texttt{kjun@discovery.wisc.edu}\\
  \And
  Lihong Li\\
  Google Brain \\
  \texttt{lihong@google.com} \\
  \AND
  \hspace{2em}Yuzhe Ma \\
  \hspace{2em}UW-Madison \\
  \hspace{2em}\texttt{ma234@wisc.edu} \\
  \And
  \hspace{2.3em}Xiaojin Zhu\\
  \hspace{2.3em}UW-Madison \\
  \hspace{2.3em}\texttt{jerryzhu@cs.wisc.edu} \\
}

\begin{document}

\maketitle

\begin{abstract}
We study adversarial attacks that manipulate the reward signals to control the actions chosen by a stochastic multi-armed bandit algorithm.
We propose the first attack against two popular bandit algorithms: $\epsilon$-greedy and 
UCB, \emph{without} knowledge of the mean rewards.
The attacker is able to spend only logarithmic effort, multiplied by a problem-specific parameter that becomes smaller as the bandit problem gets easier to attack. 
The result means the attacker can easily hijack the behavior of the bandit algorithm to promote or obstruct certain actions, say, a particular medical treatment.
As bandits are seeing increasingly wide use in practice, our study exposes a significant security threat.
\end{abstract}

\section{Introduction}
\label{sec:intro}
Designing trustworthy machine learning systems requires understanding how they may be attacked.
There has been a surge of interest on adversarial attacks against supervised learning~\cite{goodfellow2014explaining,joseph_nelson_rubinstein_tygar_2018}. 
In contrast, little is known on adversarial attacks against stochastic multi-armed bandits (MABs), a form of online learning with limited feedback.
This is potentially hazardous since stochastic MABs are widely used in the industry to recommend news articles~\cite{li10contextual}, display advertisements~\cite{chapelle14simple}, improve search results~\cite{kveton15cascading}, allocate medical treatment~\cite{kuleshov14algorithms}, and promote users' well-being~\cite{greenewald17action}, among many others.
Indeed, as we show, an adversarial attacker can modify the reward signal to manipulate the MAB for nefarious goals.

Our main contribution is an analysis on reward-manipulation attacks.  
We distinguish three agents in this setting:
``the world,'' 
``Bob'' the bandit algorithm,
and ``Alice'' the attacker. 
As in standard stochastic bandit problems, the world consists of $K$ arms with sub-Gaussian rewards centered at $\mu_1, \ldots, \mu_K$.
Note that we do \emph{not} assume $\{\mu_i\}$ are sorted.
Neither Bob nor Alice knows $\{\mu_i\}$.
Bob pulls selected arms in rounds and attempts to minimize his regret.
When Bob pulls arm $I_t \in [K]$ in round $t$, the world generates a random reward $r^0_t$ drawn from a sub-Gaussian distribution with expectation $\mu_{I_t}$.
However, Alice sits in-between the world and Bob and manipulates the reward into $r_t = r^0_t - \alpha_t$.
We call $\alpha_t \in \mathbb R$ the attack.
If Alice decides not to attack in this round, she simply lets $\alpha_t=0$.
Bob then receives $r_t$, without knowing the presence of Alice.
Without loss of generality, assume arm $K$ is a suboptimal ``attack target'' arm: $\mu_K < \max_{i=1...K} \mu_i$. 
Alice's goal is to manipulate Bob into pulling arm $K$ very often while making small attacks.
Specifically, we show Alice can force Bob to pull the target arm $T-o(T)$ number of times with a cumulative attack cost of $\sum_{t=1}^T |\alpha_t| = O(\log(T))$.

The assumption that Alice does not know $\{\mu_i\}$ is significant because otherwise Alice can perform the attack trivially.
To see this, with the knowledge of $\{\mu_i\}$ Alice would be able to compute the truncated reward gap $\Delta^\eps_i = \max\{ \mu_i-\mu_K + \eps, 0 \} \ge 0$ for all non-target arms $i \ne K$ for some small parameter $\eps>0$.
Alice can perform the following \emph{oracle attack}: in any round where a non-target arm $I_t \neq K$ is pulled, attack with $\alpha_t = \Delta^\eps_{I_t}$.
This oracle attack transforms the original bandit problem into one where all non-target arms have expected reward less than $\mu_K$.
It is well-known that if Bob runs a sublinear-regret algorithm (e.g., UCB~\cite{auer02finite,bubeck12regret}), almost all arm pulls will concentrate on the now-best target arm $K$ in the transformed bandit problem.
Furthermore, Alice's cumulative attack cost will be sublinear in time, because the total number of non-target arm pulls is sublinear in the transformed problem.
In practice, however, it is almost never the case that Alice knows $\mu_1, \ldots, \mu_K$ and hence the $\Delta^\eps_i$'s.
Thus the oracle attack is impractical.
Our focus in this paper is to design an attack that nearly matches the oracle attack, but for Alice who does not know $\{\mu_i\}$.
We do so for two popular bandit algorithms, $\eps$-greedy~\cite{auer02nonstochastic} and UCB~\cite{bubeck12regret}.

What damage can Alice do in practice?
She can largely control the arms pulled by Bob. 
She can also control which arm appears to Bob as the best arm at the end.
As an example, consider the news-delivering contextual bandit problem~\cite{li10contextual}.
The arms are available news articles, and Bob selects which arm to pull (i.e., which article to show to a user at the news site). 
In normal operation, Bob shows news articles to users to maximize the click-through rate.
However, Alice can attack Bob to change his behavior.
For instance, Alice can manipulate the rewards so that users from a particular political base are always shown particular news articles that can reinforce or convert their opinion.
Conversely, Alice can coerce the bandit to not show an important article to certain users.
As another example, Alice may interfere with clinical trials~\cite{kuleshov14algorithms} to funnel most patients toward certain treatment, 
or make researchers draw wrong conclusions on whether treatment is better than control.  
Therefore, adversarial attacks on MAB deserve our attention.
Insights gained from our study can be used to build defense in the future.

Finally, we note that our setting is motivated by modern industry-scale applications of contextual bandits, where arm selection, reward signal collection, and policy updates are done in a distributed way~\cite{agarwal16multiworld,li10contextual}.  Attacks can happen when the reward signal is joined with the selected arm, or when the arm-reward data is sent to another module for Bob to update his policy.  In either case, Alice has access to both $I_t$ and $r_t^0$ for the present and previous rounds.

The rest of the paper is organized as follows.
In \secref{sec:prelim}, we introduce notations and straightforward attack algorithms that serve as baseline.
We then propose our two attack algorithms for $\eps$-greedy and UCB in Section~\ref{sec:egreedy} and~\ref{sec:ucb} respectively, along with their theoretical attack guarantees.
In \secref{sec:sim}, we empirically confirm our findings with toy experiments.
Finally, we conclude our paper with related work (Section~\ref{sec:related}) and a discussion of future work (Section~\ref{sec:con}) that will enrich our understanding of security vulnerability and defense mechanisms for secure MAB deployment.

\section{Preliminaries}
\label{sec:prelim}

Before presenting our main attack algorithms,
in this section we first discuss a simple heuristic attack algorithm which serves to illustrate the intrinsic difficulty of attacks.
Throughout, we assume Bob runs a bandit algorithm with sublinear pseudo-regret 
$\EE \sum_{t=1}^{T} (\max_{j=1}^K \mu_j - \mu_{I_t}) $.
As Alice does not know $\{\mu_i\}$ she must rely on the empirical rewards up to round $t-1$ to decide the appropriate attack $\alpha_t$.
The attack is online since $\alpha_t$ is computed on-the-fly as $I_t$ and $r^0_t$ are revealed. 
The attacking protocol is summarized in \algref{alg:attackeps}.
\begin{algorithm}[H]
  \begin{algorithmic}[1]
    \caption{Alice's attack against a bandit algorithm}
    \label{alg:attackeps}
    \STATE \textbf{Input}: Bob's bandit algorithm, target arm $K$
    \FOR{$t=1,2,\ldots$}
    \STATE Bob chooses arm $I_t$ to pull.
    \STATE World generates pre-attack reward $r_t^0$.
    \STATE Alice observes $I_t$ and $r_t^0$, and then decides the attack $\alpha_t$. 
    \STATE Alice gives $r_t=r^0_t-\alpha_t$ to Bob.
    \ENDFOR
  \end{algorithmic}
\end{algorithm}

We assume all arm rewards are $\sigma^2$-sub-Gaussian where $\sigma^2$ is known to both Alice and Bob.
Let $N_i(t)$ be the number of pulls of arm $i$ up to round $t$. 
We say the attack is \emph{successful} after $T$ rounds if the number of target-arm pulls is $N_K(T) = T - o(T)$ in expectation or  with high probability,
while minimizing the cumulative attack cost $\sum_{t=1}^T |\alpha_t|$.  
Other attack settings are discussed in~\secref{sec:con}.

For convenience we define the following quantities:
\begin{compactitem}
  \item{$\tau_i(t) \defeq \{s: s \le t, I_s=i\}$, the set of rounds up to $t$ where arm $i$ is chosen,}
  \item{$\hmu^0_i(t) \defeq N_i(t)^{-1} \sum_{s\in \tau_i(t)} r_s^0$, the pre-attack average reward of arm $i$ up to round $t$, and}
  \item{$\hmu_i(t) \defeq N_i(t)^{-1} \sum_{s\in \tau_i(t)} r_s$, the corresponding post-attack average reward.}
\end{compactitem}

\textbf{The oracle attack, revisited}\quad
While the oracle attack was impractical, it gives us a baseline for comparison.
The oracle attack drags down the reward of all non-target arms,%
\footnote{The opposite strategy is to push up the target arm: $\alpha_t = \one\{I_t = K\}\cdot(\mu_K -\max_j \mu_j - \eps)$ to make arm $K$ the best arm in post-attack rewards.  However, a successful attack means that Alice pulls the target arm $T - o(T)$ times; the attack cost is necessarily linear in $T$, which is inefficient.  Simulations that support ``drag down'' instead of ``push up'' are presented in Appendix~\ref{sup:constant}.
}
and can be written as
$$\alpha_t = \one\{ I_t \neq K \} \cdot \Delta^\eps_{I_t} \;.$$
Proposition~\ref{thm:oracle} shows that the oracle attack succeeds and requires only a logarithmic attack cost.
While more general statements for sublinear-regret algorithms can be made, we focus on logarithmic-regret bandit algorithms for simplicity.
Throughout, omitted proofs can be found in our supplementary material.
\begin{prop}\label{thm:oracle}
  Assume that Bob's bandit algorithm achieves an $O(\log T)$ regret bound.
  Then, Alice's oracle attack with $\eps > 0$ succeeds; i.e., $\EE N_K(T) = T - o(T)$.
  Furthermore, the expected total attack cost is $O\left(\sum_{i=1}^{K-1} \Delta^\eps_i \log T\right)$.\footnote{For near-optimal algorithms like UCB~\cite{auer02finite}, one can find the optimal choice of $\eps$. See our supplementary material for detail.}
\end{prop}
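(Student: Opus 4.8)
The plan is to show that the oracle attack turns the reward stream Bob observes into a transformed bandit instance in which the target arm $K$ is strictly best, and then to read off both claims from Bob's own regret guarantee on that instance. First I would compute the post-attack mean of each arm. Since Alice only ever subtracts $\Delta^\eps_{I_t}$ on non-target pulls, arm $K$ keeps its original mean $\mu_K$, while each non-target arm $i$ has post-attack mean $\tilde{\mu}_i = \mu_i - \Delta^\eps_i$. By definition of $\Delta^\eps_i = \max\{\mu_i - \mu_K + \eps, 0\}$, this equals $\mu_K - \eps$ whenever $\mu_i \ge \mu_K - \eps$ and stays at $\mu_i \le \mu_K - \eps$ otherwise; in either case $\tilde{\mu}_i \le \mu_K - \eps$. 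Thus arm $K$ is optimal in the transformed instance with a suboptimality gap of at least $\eps$ over every other arm. Crucially, Bob sees exactly this instance, so his $O(\log T)$ regret bound applies to it.

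For the success claim, I would lower-bound the transformed pseudo-regret $R_T = \EE\sum_{t=1}^T(\mu_K - \tilde{\mu}_{I_t})$ by the regret accrued on non-target pulls alone. Each such pull contributes at least $\eps$, so $\eps \cdot \EE[T - N_K(T)] \le R_T = O(\log T)$, which rearranges to $\EE[T - N_K(T)] = O(\eps^{-1}\log T) = o(T)$, i.e.\ $\EE N_K(T) = T - o(T)$.

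For the cost claim, note that by construction the cumulative cost is exactly $\sum_{t=1}^T |\alpha_t| = \sum_{i\neq K} \Delta^\eps_i\, N_i(T)$, since Alice attacks only on non-target pulls and with magnitude $\Delta^\eps_{I_t}$. Taking expectations gives $\EE\sum_{t=1}^T|\alpha_t| = \sum_{i=1}^{K-1}\Delta^\eps_i\,\EE[N_i(T)]$, and I would then invoke the per-arm consequence of logarithmic regret, namely $\EE[N_i(T)] = O(\log T)$ for each suboptimal arm $i$, to conclude $O\big(\sum_{i=1}^{K-1}\Delta^\eps_i\log T\big)$.

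The main obstacle is the last step. The aggregate bound $R_T = O(\log T)$ by itself only controls $\eps\sum_i\EE[N_i(T)]$, i.e.\ the total number of suboptimal pulls, not the $\Delta^\eps_i$-weighted sum in the stated form. To reach the stated cost I would use the stronger per-arm pull-count guarantee that logarithmic-regret algorithms such as UCB provide, where $\EE[N_i(T)]$ scales like $\eps^{-2}\log T$ on the transformed gap $\eps$. This also explains the footnote's optimization over $\eps$ for near-optimal algorithms: increasing $\eps$ inflates each $\Delta^\eps_i$ but shrinks the $\eps^{-2}$ factor, so the product can be minimized by an appropriate choice of $\eps$.
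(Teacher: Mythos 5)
Your route is the same as the paper's: interpret the oracle attack as transforming the instance Bob faces into one where arm $K$ is optimal with a gap of at least $\eps$ over every other arm, apply Bob's regret guarantee to that transformed instance, and read off both the pull counts and the cost. Your computation of the post-attack means and your argument for $\EE N_K(T) = T - o(T)$ are correct and essentially identical to the paper's.

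The flaw is in your final paragraph. You assert that the aggregate bound $R_T = O(\log T)$ cannot yield the $\Delta^\eps_i$-weighted cost bound, and you patch this by invoking a UCB-style per-arm guarantee $\EE[N_i(T)] = O(\eps^{-2}\log T)$. That patch narrows the result: the proposition assumes only that Bob's algorithm has an $O(\log T)$ regret bound, not that it is a UCB-like algorithm with instance-dependent per-arm counts. Moreover, the patch is unnecessary, because the aggregate bound already gives per-arm bounds: every term of $\sum_{i \ne K} \EE[N_i(T)]\,(\mu_K - \tilde\mu_i)$ is non-negative, so each single term is at most the whole sum, i.e.\ $\EE[N_i(T)] \le C\log T / (\mu_K - \tilde\mu_i) \le \eps^{-1} C \log T = O(\log T)$ for each $i$ with $\eps$ fixed. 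This is precisely the paper's Lemma~\ref{lem:regret-and-pull-counts}. Substituting into $\sum_{i<K}\Delta^\eps_i\,\EE[N_i(T)]$ gives the claimed $O\bigl(\sum_{i<K}\Delta^\eps_i \log T\bigr)$ with no further assumptions. Your observation about optimizing over $\eps$ is consistent with the paper's footnote, but it pertains to the special case of near-optimal algorithms and is not needed for the proposition itself.
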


\textbf{The heuristic constant attack}\quad
A slight variant of the oracle attack is to attack all the non-target arms with a single constant amount $A > 0$, regardless of the actual $\mu_i$'s:
$$\alpha_t = \one\{ I_t \neq K \} \cdot A.$$
Let $\Delta_i := \Delta^0_i$. 
Unfortunately, this heuristic constant attack depends critically on the value of $A$ compared to the unknown maximum gap $\max_i \Delta_i$.
Proposition~\ref{thm:constant} states the condition under which the attack succeeds: 
\begin{prop}\label{thm:constant}
  Assume that Bob's bandit algorithm achieves an $O(\log T)$ regret bound.
  Then, Alice's heuristic constant attack with $A$ succeeds if and only if $A > \max_i \Delta_i$.
  If the attack succeeds, then the expected attack cost is $O(A \log T)$.
\end{prop}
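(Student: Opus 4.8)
The plan is to exploit the fact that the constant attack $\alpha_t = \one\{I_t \ne K\}\cdot A$ depends only on the identity of the pulled arm, never on the history or the realized rewards. Consequently, from Bob's point of view the stream of post-attack rewards is still an i.i.d.\ stochastic bandit, now with shifted means $\nu_i = \mu_i - A$ for every non-target arm $i \ne K$ and $\nu_K = \mu_K$ for the target. The post-attack suboptimality gaps relative to arm $K$ are therefore $\widetilde\Delta_i = \mu_K - (\mu_i - A) = A - (\mu_i - \mu_K)$, which is positive exactly when $A > \mu_i - \mu_K$. Since Bob's $O(\log T)$ regret guarantee is a statement about whatever stationary bandit instance he faces, I would simply apply it to this transformed instance $\{\nu_i\}$, and the whole argument then reduces to reading off when arm $K$ is the best arm of the transformed instance.

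For the sufficiency direction, suppose $A > \max_i \Delta_i$. Then $\nu_i = \mu_i - A < \mu_K = \nu_K$ for every $i \ne K$, so arm $K$ is the unique optimal arm of the transformed instance and every gap satisfies $\widetilde\Delta_i \ge \widetilde\Delta_{\min} \defeq \min_{i \ne K}\widetilde\Delta_i > 0$, a constant independent of $T$. Writing Bob's pseudo-regret on the transformed instance as $R(T) = \sum_{i\ne K}\widetilde\Delta_i\,\EE N_i(T) = O(\log T)$ and lower-bounding each $\widetilde\Delta_i$ by $\widetilde\Delta_{\min}$ gives $\sum_{i\ne K}\EE N_i(T) \le R(T)/\widetilde\Delta_{\min} = O(\log T) = o(T)$. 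Hence $\EE N_K(T) = T - \sum_{i\ne K}\EE N_i(T) = T - o(T)$, so the attack succeeds; and since every non-target pull costs exactly $A$, the expected cost is $A\sum_{i\ne K}\EE N_i(T) = O(A\log T)$, as claimed.

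For the necessity direction I would argue the contrapositive and split on $A$ versus $\max_i \Delta_i$. If $A < \max_i \Delta_i$, let $i^\star$ attain the maximum; then $\nu_{i^\star} = \mu_{i^\star} - A > \mu_K = \nu_K$, so arm $K$ is strictly suboptimal in the transformed instance with a constant positive gap, and the same regret decomposition forces $\EE N_K(T) = O(\log T) = o(T)$, i.e.\ the attack fails for every sublinear-regret Bob. The delicate case is the boundary $A = \max_i \Delta_i$, where $\nu_{i^\star} = \nu_K$ and arms $i^\star$ and $K$ are tied for optimal: the regret bound is now consistent both with $\EE N_K(T) = T - o(T)$ and with $\EE N_K(T) = o(T)$, so it cannot certify success. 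I expect this tie to be the main obstacle, and I would resolve it by exhibiting a concrete $O(\log T)$-regret algorithm---for instance, one that breaks ties among empirically optimal arms in favor of $i^\star$---under which $\EE N_K(T) = o(T)$, thereby showing that success is not guaranteed and that the strict inequality $A > \max_i \Delta_i$ is indeed necessary.
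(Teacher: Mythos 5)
Your proof takes essentially the same route as the paper's: view the constant attack as shifting every non-target mean down by $A$, observe that this yields a genuine stochastic bandit instance from Bob's perspective, and read off $\EE N_i(T) = O(\log T)$ for the non-target arms from the regret decomposition of the transformed problem (the paper packages this as Lemma~\ref{lem:regret-and-pull-counts}); the cost bound $A \sum_{i\ne K}\EE N_i(T) = O(A\log T)$ is then identical. The one place you go beyond the paper is the ``only if'' direction: the paper proves only sufficiency in its supplement and dismisses $A < \max_i \Delta_i$ with a one-sentence remark, whereas you explicitly isolate the boundary case $A = \max_i \Delta_i$, correctly note that the regret bound alone cannot certify success when arms $i^\star$ and $K$ are tied, and sketch a tie-breaking counterexample---a legitimate and slightly more careful treatment of a case the paper silently ignores.
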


Conversely, if $A < \max_i \Delta_i$ the attack fails.  This is because in the transformed bandit problem, there exists an arm that has a higher expected reward than arm $K$, and Bob will mostly pull that arm.
Therefore, the heuristic constant attack has to know an unknown quantity to guarantee a successful attack.
Moreover, the attack is non-adaptive to the problem difficulty since some $\Delta_i$'s can be much smaller than $A$, in which case Alice pays an unnecessarily large attack cost. 

We therefore ask the following question:
\kjunbox{%
  Does there exist an attacker Alice that guarantees a successful attack with cost adaptive to the problem difficulty?
}

%
The answer is yes.
We present attack strategies against two popular bandit algorithms of Bob: $\epsilon$-greedy and UCB. We show that Alice can indeed succeed in her attacks and incur cost as small as that of the oracle with an additive term due to the sub-Gaussian noise level $\sigma$.

\section{Alice's Attack on \texorpdfstring{$\epsilon$}{}-Greedy Bob}
\label{sec:egreedy}

The $\eps$-greedy strategy initially pulls each arm once in the first $K$ rounds.
For convenience, we assume that the target arm is pulled first: $I_1=K$.
Our results in this section can be adapted to any order of initialization with more complicated notation.

Bob's $\epsilon$-greedy strategy has the following arm-selection rule for $t>K$~\cite{auer02nonstochastic}:
\[
I_t = \begin{cases}
\text{draw } \mathrm{uniform}[K], & \text{w.p. $\epsilon_t$} \text{\hspace{1.18em} (exploration)} \\
\arg\max_i \hmu_i(t-1), & \text{otherwise} \text{ (exploitation)}
\end{cases}\,.
\]
The strategy uses an exploration scheme $\{\epsilon_t\}$ over $t$.
Alice's attack algorithm is not aware of $\{\epsilon_t\}$ though her cumulative attack cost $\sum |\alpha_t|$ will implicitly depend on it.
Later in Corollary~\ref{cor:sublinear} we show that, for the typical decaying scheme $\epsilon_t \propto 1/t$, the cumulative attack cost is mild: $O(\log(t))$.

Alice wants to make Bob \emph{always} pull the target arm during \emph{exploitation} rounds. 
Since Alice has no influence on which arm is pulled during exploration, this attack goal is the strongest she can achieve.
Here, \algref{alg:attackeps} is specialized to ensure the following condition:
\begin{equation}
\hat\mu_{I_t}(t) \le \hat\mu_K(t) - 2 \beta(N_K(t)),
\end{equation}
where we define $\beta(N)$ as
\begin{equation}
\beta(N) \defeq \sqrt{\frac{2\sigma^2}{N}\log\frac{\pi^2 K N^2}{3 \delta}}.
\label{eq:beta}
\end{equation}
From this condition, we derive the actual attack $\alpha_t$.  Since 
\begin{equation}
\hat\mu_{I_t}(t) = \frac{\hat\mu_{I_t}(t-1)N_{I_t}(t-1) + r_t^0-\alpha_t}{N_{I_t}(t)},
\end{equation}
we set the attack in \algref{alg:attackeps} as
\begin{equation}  
\alpha_t = \left[ \hat\mu_{I_t}(t-1)N_{I_t}(t-1) + r_t^0 - \left(\hat\mu_{K}(t) - 2 \beta(N_{K}(t))\right) N_{I_t}(t) 
\right]_+\,,
\label{eq:alpha}
\end{equation}
where $[z]_+=\max(0, z)$.
Note $\alpha$ is always non-negative, thus the cumulative attack cost can be written without absolute value: $\sum_{t=1}^T \alpha_t$.

With this $\alpha_t$, we claim that (i) Alice forces Bob to pull the target arm in all exploitation rounds as shown in Lemma~\ref{lem:alwaysK}, and (ii) the cumulative attack cost is logarithmic in $t$ for standard $\epsilon$-greedy learner exploration scheme $\epsilon_t=O(1/t)$ as shown in \corref{cor:sublinear}.
Our main result is the following general upper bound on the cumulative attack cost.
\begin{thm} \label{thm:egreedy-main}
  Let $\delta \le 1/2$.
With probability at least $1-2\delta$, for any $T$ satisfying $\sum_{t=1}^T \eps_t \ge \fr{K}{e-2}\log(K/\dt)$,\footnote{
    One can drop this condition by considering slightly larger $\tilN(t)$ and smaller $\tilN_K(t)$.
    However, we keep the condition as it simplifies $\tilN(t)$ and $\tilN_K(t)$.
    We refer to the proof of \lemref{lem:Nitupperbound} for detail.
  } 
Alice forces Bob running $\epsilon$-greedy to choose the target arm in at least $\tilN_K(T)$ rounds, using a cumulative attack cost at most
\begin{align*}
  \sum_{t=1}^T |\alpha_t|
  < \lt(\sum_{i=1}^K \Delta_i \rt) \tilN(T)     + (K-1)\cdot\lt( \tilN(T)\beta(\tilN(T)) + 3\tilN(T)\beta(\tilN_K(T)) \rt)
\end{align*}
where
{ 
\begin{eqnarray*}
  \tilN(T)
  &=& \left(\fr{\sum_{t=1}^T {\epsilon_t}}{K}\right) 
   + \sqrt{3 \log\left(\frac{K}{\delta}\right) \left(\fr{\sum_{t=1}^T {\epsilon_t}}{K}\right) }\,, \\
\tilN_K(T)
  &=& T - \lt(\sum_{t=1}^T \eps_t \rt) 
    - \sqrt{3 \log\left(\frac{K}{\delta}\right) \left(\sum_{t=1}^T {\epsilon_t}\right)}\,.
  \end{eqnarray*}}
\end{thm}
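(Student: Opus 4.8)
The plan is to condition on two high-probability events, union-bound them to get the $1-2\delta$ guarantee, and then argue separately that Alice forces the target-arm pulls and that the cumulative cost obeys the stated bound. The first event, the \emph{reward-concentration} event, is that $|\hat\mu_i^0(t)-\mu_i|\le\beta(N_i(t))$ holds simultaneously for every arm $i$ and every round $t$. The form of $\beta$ in \eqnref{eq:beta} is engineered exactly for this: a sub-Gaussian tail bound at a fixed sample size $N$ fails with probability at most $6\delta/(\pi^2 K N^2)$, so summing over $N\ge 1$ (using $\sum_N N^{-2}=\pi^2/6$) and over the $K$ arms shows this event holds with probability at least $1-\delta$. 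Since Alice never attacks arm $K$, we have $\hat\mu_K=\hat\mu_K^0$, so this event in particular pins $\hat\mu_K(t)$ to within $\beta(N_K(t))$ of $\mu_K$. The second event, which is \lemref{lem:Nitupperbound}, is that the exploration counts concentrate: $N_i(T)\le\tilN(T)$ for each non-target arm and the number of exploitation rounds is at least $\tilN_K(T)$. I would prove this by a multiplicative Chernoff bound on the sum of independent Bernoulli exploration indicators (means $\epsilon_t$, or $\epsilon_t/K$ after conditioning on which arm exploration selects), with a union bound over the $K$ arms; the hypothesis $\sum_t\epsilon_t\ge\frac{K}{e-2}\log(K/\delta)$ is what collapses the deviation term into the clean closed forms for $\tilN$ and $\tilN_K$.

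Granting these two events, the success claim is immediate: by \lemref{lem:alwaysK} Alice forces Bob to play arm $K$ in every exploitation round, so non-target arms are pulled only during exploration, whence $N_K(T)$ is at least the number of exploitation rounds, which is at least $\tilN_K(T)$.

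The heart of the argument is the cost bound. Since $\alpha_t$ is nonzero only on non-target pulls, I would treat each arm $i\ne K$ in isolation. Writing out the one-step update \eqnref{eq:alpha} and summing telescopes the per-round attacks into the net reward reduction, $\sum_{t\in\tau_i(T)}\alpha_t=(\hat\mu_i^0(T)-\hat\mu_i(T))N_i(T)$. Because no cost is incurred after the last \emph{attacked} pull $t^*_i$ of arm $i$, the same sum equals $(\hat\mu_i^0(t^*_i)-\hat\mu_i(t^*_i))N_i(t^*_i)$, and at $t^*_i$ the attack condition holds with equality, $\hat\mu_i(t^*_i)=\hat\mu_K(t^*_i)-2\beta(N_K(t^*_i))$. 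Invoking the reward-concentration event twice, to get $\hat\mu_i^0(t^*_i)\le\mu_i+\beta(N_i(t^*_i))$ and $\hat\mu_K(t^*_i)\ge\mu_K-\beta(N_K(t^*_i))$, and using $\mu_i-\mu_K\le\Delta_i$, yields the per-arm estimate $\sum_{t\in\tau_i(T)}\alpha_t\le\Delta_i N_i(t^*_i)+N_i(t^*_i)\beta(N_i(t^*_i))+3N_i(t^*_i)\beta(N_K(t^*_i))$.

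It remains to replace the random counts by $\tilN(T)$ and $\tilN_K(T)$ and sum over $i$. For the first two terms this is routine monotonicity: $N_i(t^*_i)\le\tilN(T)$ directly, and since $N\mapsto N\beta(N)$ is increasing the middle term is at most $\tilN(T)\beta(\tilN(T))$. The subtle term is the last one: because $\beta$ is \emph{decreasing}, I need a lower bound on the target-arm count $N_K(t^*_i)$ at the random last-attacked round in order to write $\beta(N_K(t^*_i))\le\beta(\tilN_K(T))$, and this is exactly where the exploration-count event is used again, to guarantee that enough exploitation rounds have accumulated on arm $K$ by $t^*_i$. I expect this coupling between $N_i$ and $N_K$ at the random time $t^*_i$—combined with the opposing monotonicities of $N\beta(N)$ (increasing) and $\beta(N)$ (decreasing)—to be the main obstacle; the rest is bookkeeping. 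Summing the three terms over the $K-1$ non-target arms, and folding in the vacuous $\Delta_K=0$ term so the gap sum can be written as $\sum_{i=1}^K\Delta_i$, produces the claimed bound.
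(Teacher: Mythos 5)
Your architecture is exactly the paper's: the reward-concentration event $E$ of Lemma~\ref{lem:band} plus the count-concentration event of Lemma~\ref{lem:Nitupperbound}, union-bounded to $1-2\delta$; Lemma~\ref{lem:alwaysK} for the target-arm guarantee; and the telescoping identity $\sum_{s\in\tau_i(t)}\alpha_s=(\hat\mu_i^0(t)-\hat\mu_i(t))N_i(t)$ combined with the band to get a per-arm cost of the form $(\Delta_i+\beta(N_i)+3\beta(N_K))N_i$. The one place you deviate is also where your argument breaks. You evaluate the per-arm identity at the last \emph{attacked} round $t_i^*$ and then assert that the exploration-count event ``guarantees that enough exploitation rounds have accumulated on arm $K$ by $t_i^*$,'' so that $\beta(N_K(t_i^*))\le\beta(\tilN_K(T))$. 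It guarantees no such thing: Lemma~\ref{lem:Nitupperbound} lower-bounds $N_K$ only at the fixed horizon $T$, whereas $t_i^*$ is a random time that can be as early as the initialization round $t=i\le K$ (arm $i$ may never be explored again), at which point $N_K(t_i^*)=1$ and $\beta(N_K(t_i^*))=\beta(1)\gg\beta(\tilN_K(T))$. So the substitution in your ``subtle term'' fails, and the obstacle you correctly flag is not closed by the tools you invoke; ``the rest is bookkeeping'' is not true of this step.

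The paper's route around this is to state the per-arm cost bound (Lemma~\ref{lem:alphaupperbound}) \emph{for all} $t$ and apply it at $t=T$, where both $N_i(T)\le\tilN(T)$ and $N_K(T)\ge\tilN_K(T)$ are available and the two monotonicities ($N\beta(N)$ increasing, $\beta(N)$ decreasing, Lemma~\ref{lem:beta}) go through cleanly. To be fair, your instinct is pointing at something real: the paper's own proof of that lemma verifies the inequality only at rounds where arm $i$ is attacked (its ``without loss of generality''), and propagating it forward to a later $t$ runs into precisely the term $\beta(N_K(\cdot))N_i(\cdot)$ whose two factors move in opposite directions as $t$ grows. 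You have therefore located the genuinely delicate point of the whole argument. But as written your proposal neither proves the uniform-in-$t$ version of the per-arm bound nor supplies the missing lower bound on $N_K(t_i^*)$, and one of the two is needed to reach the stated conclusion.
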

Before proving the theorem, we first look at its consequence.
If Bob's $\epsilon_t$ decay scheme is $\epsilon_t=\min\{1, cK/t\}$ for some $c>0$ as recommended in~\citet{auer02nonstochastic}, Alice's cumulative attack cost is $O(\sum_{i=1}^K \Delta_i \log T)$ for large enough $T$,
as the following corollary shows:

\begin{cor} \label{cor:sublinear}
Inherit the assumptions in \thmref{thm:egreedy-main}.
Fix $K$ and $\delta$.
If $\epsilon_t=cK/t$ for some constant $c>0$, then
\begin{equation}
\sum_{t=1}^T | \alpha_t | = \widehat O\left( \lt(\sum_{i=1}^K \Delta_i \rt) \log T + \sigma K\sqrt{\log T}\right),
\end{equation}
where $\widehat O$ ignores $\log \log$ factors.
\end{cor}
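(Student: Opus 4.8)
The plan is to substitute the exploration scheme $\epsilon_t = cK/t$ directly into the general bound of \thmref{thm:egreedy-main} and simplify each term asymptotically in $T$, treating $K$, $\delta$, $\sigma$, and $c$ as fixed constants. The first step is to evaluate the running sum $\sum_{t=1}^T \epsilon_t = cK \sum_{t=1}^T \frac{1}{t} = cK(\log T + O(1))$ via the harmonic-sum estimate $H_T = \log T + \gamma + o(1)$. (The first few rounds where $cK/t$ exceeds $1$ contribute only a constant and do not affect the asymptotics.) In particular $\sum_{t=1}^T \epsilon_t = \Theta(\log T)$ up to the fixed factor $cK$, so the threshold condition $\sum_{t=1}^T \epsilon_t \ge \frac{K}{e-2}\log(K/\delta)$ demanded by the theorem holds once $T$ is large enough, which justifies the ``for large enough $T$'' qualifier.

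Next I would plug this estimate into the definitions of $\tilN(T)$ and $\tilN_K(T)$. Since $\frac{1}{K}\sum_{t=1}^T \epsilon_t = c\log T + O(1)$ and $\log(K/\delta)$ is a constant, we obtain $\tilN(T) = c\log T + O(\sqrt{\log T}) = O(\log T)$, while $\tilN_K(T) = T - cK\log T - O(\sqrt{\log T}) = T - O(\log T)$, so that $\tilN_K(T) = \Theta(T)$. The key leverage is that $\tilN(T)$ grows only logarithmically whereas $\tilN_K(T)$ grows linearly.

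The third and central step is to control the two confidence-width contributions using $\beta(N) = \sqrt{\frac{2\sigma^2}{N}\log\frac{\pi^2 K N^2}{3\delta}}$, and this is where the doubly-logarithmic factors surface. For the first, with $N = \tilN(T) = O(\log T)$, the logarithmic argument is $\log\frac{\pi^2 K \tilN(T)^2}{3\delta} = 2\log\log T + O(1)$, giving $\beta(\tilN(T)) = O\!\left(\sigma\sqrt{\frac{\log\log T}{\log T}}\right)$ and hence $\tilN(T)\beta(\tilN(T)) = O(\sigma\sqrt{\log T \cdot \log\log T}) = \widehat O(\sigma\sqrt{\log T})$, where the hat absorbs the $\sqrt{\log\log T}$. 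For the second, with $N = \tilN_K(T) = \Theta(T)$, the logarithmic argument is $\log\frac{\pi^2 K \tilN_K(T)^2}{3\delta} = 2\log T + O(1)$, giving $\beta(\tilN_K(T)) = O\!\left(\sigma\sqrt{\frac{\log T}{T}}\right)$ and thus $\tilN(T)\beta(\tilN_K(T)) = O\!\left(\sigma \frac{(\log T)^{3/2}}{\sqrt{T}}\right) = o(1)$, which is negligible.

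Finally I would assemble the pieces. The leading term of the theorem contributes $\left(\sum_{i=1}^K \Delta_i\right)\tilN(T) = \widehat O\!\left(\left(\sum_{i=1}^K \Delta_i\right)\log T\right)$; the term $(K-1)\tilN(T)\beta(\tilN(T))$ contributes $\widehat O(\sigma K \sqrt{\log T})$; and the remaining $3(K-1)\tilN(T)\beta(\tilN_K(T))$ term is $o(1)$. Summing yields the claimed $\widehat O\!\left(\left(\sum_{i=1}^K \Delta_i\right)\log T + \sigma K\sqrt{\log T}\right)$. I expect the main obstacle to be the careful bookkeeping of the nested logarithm in $\beta(\tilN(T))$: because $\tilN(T)$ is only logarithmic in $T$, the $\log N$ inside $\beta$ collapses to $\log\log T$, and it is precisely this factor that the $\widehat O$ notation is introduced to suppress. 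One must check that it stays confined to the $\beta$-bearing term and does not inflate the dominant $\left(\sum_i \Delta_i\right)\log T$ term, which it does not, since that term carries no $\beta$ factor.
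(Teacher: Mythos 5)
Your proposal is correct and follows essentially the same route as the paper: substitute $\epsilon_t = cK/t$ into Theorem~\ref{thm:egreedy-main}, estimate $\tilN(T) = O(\log T)$, and observe that $\beta(\tilN(T))$ contributes only a $\sqrt{\log\log T}$ factor absorbed by $\widehat O$. The only cosmetic difference is that the paper handles the $\tilN(T)\beta(\tilN_K(T))$ term by noting $\tilN_K(T)\ge\tilN(T)$ for large $T$ and invoking monotonicity of $\beta$ to fold it into a single $4\tilN(T)\beta(\tilN(T))$ term, whereas you evaluate it directly and show it is $o(1)$; both are valid.
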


Note that the two important constants are $\sum_i \Delta_i$ and $\sigma$.
While a large $\sigma$ can increase the cost significantly, the term with $\sum_i \Delta_i$ dominates the cost for large enough $T$.
Specifically, $\sum_i \Delta_i$ is multiplied by $\log T$ that is of higher order than $\sqrt{\log T}$.
We empirically verify the scaling of cost with $T$ in Section~\ref{sec:sim}.

To prove Theorem~\ref{thm:egreedy-main}, we first show that $\beta$ in~\eqref{eq:beta} is a high-probability bound on the pre-attack empirical mean of all arms on all rounds.
Define the event 
\begin{equation}
E \defeq \{ \forall i, \forall t>K: |\hmu^0_i(t) - \mu_i| < \beta(N_i(t))\}.
\label{eq:band}
\end{equation}

\begin{lem}
\label{lem:band}
For $\delta \in (0,1)$, $\PP\left(E \right) > 1-\delta$.
\end{lem}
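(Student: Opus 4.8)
The plan is to reduce the statement to a textbook sub-Gaussian tail bound combined with a union bound, the only real subtlety being that the pull count $N_i(t)$ appearing inside $\beta(N_i(t))$ is itself random. First I would fix the world's randomness arm-by-arm: let $X_{i,1}, X_{i,2}, \ldots$ be the i.i.d.\ sequence of pre-attack rewards that arm $i$ returns on its successive pulls. Each $X_{i,k}$ is $\sigma^2$-sub-Gaussian with mean $\mu_i$, and — this is the point that makes everything go through — the sequence is generated by the world from a fixed distribution, independently of Bob's and Alice's decisions. Consequently $\hat\mu^0_i(t) = N_i(t)^{-1}\sum_{k=1}^{N_i(t)} X_{i,k}$ depends on the interaction history only through the integer $N_i(t)$.

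Next I would reindex by pull count rather than by round. Since the deviation of $\hat\mu^0_i(t)$ in round $t$ is exactly the deviation of the empirical mean of the first $N_i(t)$ samples of arm $i$, the complement of the event $E$ in \eqref{eq:band} is contained in a union over all arms and all conceivable pull counts:
\[
E^c \subseteq \bigcup_{i=1}^K \bigcup_{n \ge 1} \left\{ \left| \tfrac1n \sum_{k=1}^n X_{i,k} - \mu_i \right| \ge \beta(n) \right\}.
\]
Covering every integer $n$ is precisely what disposes of the randomness of $N_i(t)$; because each arm is pulled once during the first $K$ rounds, starting the union at $n \ge 1$ suffices. Then I would apply the sub-Gaussian Hoeffding bound to each fixed pair $(i,n)$, namely $\PP(|\tfrac1n\sum_{k=1}^n X_{i,k} - \mu_i| \ge \beta(n)) \le 2\exp(-n\beta(n)^2/(2\sigma^2))$. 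The definition of $\beta$ in \eqref{eq:beta} is engineered so that $n\beta(n)^2/(2\sigma^2) = \log(\pi^2 K n^2/(3\delta))$, making each term at most $6\delta/(\pi^2 K n^2)$. Summing the union bound and using $\sum_{n\ge1} n^{-2} = \pi^2/6$ collapses the $K$ and $\pi^2$ factors and yields $\PP(E^c) \le \delta$, hence $\PP(E) \ge 1-\delta$; the strict inequality claimed in the lemma follows from the slack in the Chernoff/Hoeffding bound, which is never tight at any fixed $n$.

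The step I expect to require the most care is the reindexing in the second paragraph: one must justify that a sup over the data-dependent counts $N_i(t)$ can be dominated by a union over all deterministic counts $n$. The clean justification is that $\hat\mu^0_i(t)$ is a measurable function of $\big(N_i(t), X_{i,1}, \ldots, X_{i,N_i(t)}\big)$, so the bad event for round $t$ is literally one of the events indexed by $n = N_i(t)$ in the union above; no martingale or maximal inequality is needed, because we bound the union of per-$n$ events directly rather than a supremum inside a single probability. The remaining arithmetic — verifying $\pi^2 K n^2/(3\delta) \ge 1$ so that $\beta(n)$ is well defined (which holds since $\pi^2/3 > 1$) and checking the constant bookkeeping — is routine.
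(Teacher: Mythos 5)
Your proof is correct and follows essentially the same route as the paper's: a per-arm, per-pull-count Hoeffding bound for sub-Gaussian variables, followed by a union bound over all arms and all integers $n\ge 1$, with $\sum_{n\ge 1} n^{-2} = \pi^2/6$ absorbing the constants in $\beta$. Your explicit justification for replacing the random counts $N_i(t)$ by a union over deterministic $n$ is a point the paper leaves implicit, but it is the same argument.
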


The following lemma proves the first half of our claim.
\begin{lem}
\label{lem:alwaysK}
For $\delta\le 1/2$ and under event $E$,
attacks~\eqref{eq:alpha} force Bob to always pull the target arm $K$ in exploitation rounds.
\end{lem}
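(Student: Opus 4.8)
The plan is to show that at every exploitation round $t>K$ the target arm is the strict maximizer of the post-attack empirical means, i.e.\ $\hat\mu_K(t-1) > \hat\mu_i(t-1)$ for every non-target arm $i\neq K$; the $\arg\max$ rule of $\epsilon$-greedy then forces $I_t=K$. Two structural facts drive the argument. First, Alice never attacks the target arm (she leaves $\alpha_t=0$ whenever $I_t=K$), so $\hat\mu_K(t)=\hat\mu^0_K(t)$ for all $t$ and event $E$ controls $\hat\mu_K$ directly. Second, by the design of \eqref{eq:alpha}, immediately after any round in which a non-target arm $i=I_t$ is pulled we have $\hat\mu_i(t)\le \hat\mu_K(t)-2\beta(N_K(t))$: when the bracket is positive this holds by construction, and when the bracket is zero (so $\alpha_t=0$) this inequality is exactly what makes the bracket nonpositive, so it holds either way.

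Next I would fix an exploitation round $t>K$ and a non-target arm $i\neq K$, and let $s\le t-1$ be the last round in which $i$ was pulled (which exists because every arm is pulled once during initialization). Since $i$ is untouched between $s+1$ and $t-1$, we have $\hat\mu_i(t-1)=\hat\mu_i(s)$ and $N_i(t-1)=N_i(s)$, so the enforced condition gives $\hat\mu_i(t-1)\le \hat\mu_K(s)-2\beta(N_K(s))$. Using $\hat\mu_K(s)=\hat\mu^0_K(s)$ together with the upper half of event $E$, namely $\hat\mu^0_K(s)<\mu_K+\beta(N_K(s))$, collapses the two $\beta$ terms into a single margin: $\hat\mu_i(t-1) < \mu_K - \beta(N_K(s))$. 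On the target side, the lower half of $E$ gives $\hat\mu_K(t-1)=\hat\mu^0_K(t-1) > \mu_K - \beta(N_K(t-1))$.

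It then remains to compare the two bounds around the common anchor $\mu_K$, which reduces the whole claim to the inequality $\beta(N_K(t-1))\le \beta(N_K(s))$. This is where the main obstacle lies: the safety margin is established at time $s$, but arm $K$ may be pulled several times between $s$ and $t-1$, so $\hat\mu_K$ drifts and the margin must be shown to survive this drift. The resolution is that arm $K$ is only pulled \emph{more} by round $t-1$, so $N_K(t-1)\ge N_K(s)$, combined with monotonicity of $\beta$: I would verify that $N\mapsto \beta(N)$ is decreasing for all $N\ge 1$ by differentiating $\beta(N)^2$ from \eqref{eq:beta} and checking its sign, which is precisely where $\delta\le 1/2$ enters (with $K\ge 2$ it makes the constant $\frac{\pi^2 K}{3\delta}$ exceed $e^2$, so the derivative stays negative from $N=1$ on). Granting this, $\beta(N_K(t-1))\le\beta(N_K(s))$ yields $\hat\mu_K(t-1) > \mu_K-\beta(N_K(s)) > \hat\mu_i(t-1)$. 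Since $i\neq K$ was arbitrary, $K$ strictly maximizes $\hat\mu_\cdot(t-1)$, so Bob pulls $K$ in every exploitation round, as claimed.
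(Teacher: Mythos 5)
Your proof is correct and takes essentially the same route as the paper's: the identical chain of inequalities $\hat\mu_K(t-1) > \mu_K - \beta(N_K(t-1)) \ge \mu_K - \beta(N_K(s)) > \hat\mu_K(s) - 2\beta(N_K(s)) \ge \hat\mu_i(s) = \hat\mu_i(t-1)$, anchored at $\mu_K$ via event $E$ and the monotonicity of $\beta$ (which is exactly where $\delta\le 1/2$ enters, as in the paper's Lemma~\ref{lem:beta}). The only difference is organizational: the paper wraps this in an induction maintaining the invariance $\hat\mu_K(t)>\hat\mu_i(t)$, whereas you argue directly by tracing each non-target arm back to its last pull; the substance is the same.
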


We now show that on average each attack on a non-target arm $i$ is not much bigger than $\Delta_i$.
\begin{lem} \label{lem:alphaupperbound}
For $\dt\le1/2$ and under event $E$, we have for all arm $i<K$ and all $t$ that
\begin{equation*}
\sum_{s\in \tau_i(t)} |\alpha_s| < \left(\Delta_i + \beta(N_i(t)) + 3 \beta(N_{K}(t))\right) N_{i}(t)\,.
\end{equation*}
\end{lem}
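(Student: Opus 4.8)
The plan is to reduce the cumulative attack on arm $i$ to a statement about empirical means and then control those means with event $E$. The starting point is the exact identity
\[
\sum_{s\in\tau_i(t)}|\alpha_s| \;=\; \sum_{s\in\tau_i(t)}\alpha_s \;=\; N_i(t)\lt(\hat\mu^0_i(t) - \hat\mu_i(t)\rt),
\]
valid because each $\alpha_s\ge 0$ and $r_s=r_s^0-\alpha_s$, so over $\tau_i(t)$ the true-reward and attacked-reward sums differ by exactly $\sum_{s\in\tau_i(t)}\alpha_s$. It therefore suffices to prove $\hat\mu^0_i(t)-\hat\mu_i(t) < \Delta_i + \beta(N_i(t)) + 3\beta(N_K(t))$, i.e.\ to bound $\hat\mu^0_i(t)$ from above and $\hat\mu_i(t)$ from below. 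The upper bound is immediate: on event $E$ we have $\hat\mu^0_i(t) < \mu_i + \beta(N_i(t))$, which supplies the $\mu_i$ and $\beta(N_i(t))$ contributions.

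The heart of the argument is the matching lower bound on the post-attack mean $\hat\mu_i(t)$. The structural fact I would exploit is that the attack \eqref{eq:alpha} never drives $\hat\mu_i$ strictly below its target level $\hat\mu_K(s)-2\beta(N_K(s))$: when the $[\cdot]_+$ is active the target is attained with equality, and when it is inactive $\alpha_s=0$ and no cost is incurred. I would thus take $s$ to be the last round $\le t$ at which arm $i$ was pulled with $\alpha_s>0$ (if there is none, the total cost is $0$ and the bound is trivial). No further attacks on $i$ occur after $s$, so $\sum_{s'\in\tau_i(t)}\alpha_{s'}=N_i(s)(\hat\mu^0_i(s)-\hat\mu_i(s))$ with $\hat\mu_i(s)=\hat\mu_K(s)-2\beta(N_K(s))$. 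Because the same $[\cdot]_+$ formula forces $\alpha=0$ whenever $I_t=K$, arm $K$ is never attacked, so $\hat\mu_K=\hat\mu^0_K$ and $E$ gives $\hat\mu_K(s) > \mu_K-\beta(N_K(s))$; hence $\hat\mu_i(s) > \mu_K-3\beta(N_K(s))$. Together with $\hat\mu^0_i(s) < \mu_i+\beta(N_i(s))$ and $\mu_i-\mu_K\le\Delta_i$, this gives
\[
\sum_{s'\in\tau_i(t)}\alpha_{s'} < N_i(s)\lt(\Delta_i + \beta(N_i(s)) + 3\beta(N_K(s))\rt).
\]

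It then remains to pass from the last-attack time $s$ to the current time $t$, using that $N\beta(N)$ is increasing while $\beta(N)$ is decreasing. Since $N_i(s)\le N_i(t)$, the $\Delta_i$ and $\beta(N_i)$ terms relax directly to their $N_i(t)$ counterparts. I expect the relaxation of the $3\beta(N_K)$ term to be the main obstacle: as $\beta$ is decreasing and $N_K(s)\le N_K(t)$ we have $\beta(N_K(s))\ge\beta(N_K(t))$, so this term does not relax coordinate-wise and must be handled at the level of the combined bound — either by keeping the arm-$K$ confidence width at the round of the last active attack (the quantity the analysis actually produces) or by invoking the $\epsilon$-greedy coupling that makes $N_K$ grow together with $N_i$ in order to control $N_i(s)\beta(N_K(s))$. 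Getting this step right is precisely what lets the per-arm estimate, once summed over $i<K$ and combined with the high-probability count bounds $N_i(T)\le\tilN(T)$ and $N_K(T)\ge\tilN_K(T)$, feed cleanly into \thmref{thm:egreedy-main}.
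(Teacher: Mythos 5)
Your proposal follows the paper's proof essentially line for line: the same exact identity $\sum_{s\in\tau_i(t)}\alpha_s = N_i(t)\bigl(\hat\mu^0_i(t)-\hat\mu_i(t)\bigr)$, the same observation that an active attack pins $\hat\mu_i$ to $\hat\mu_K - 2\beta(N_K)$ with equality, and the same two applications of event $E$ (to $\hat\mu^0_i$ from above and to $\hat\mu_K=\hat\mu^0_K$ from below, the latter valid because arm $K$ is never attacked) to extract $\Delta_i + \beta(N_i) + 3\beta(N_K)$. The one step you flag as an obstacle --- relaxing $3\beta(N_K(s))$ from the last active attack round $s$ to the current round $t$, which fails coordinate-wise since $\beta$ is decreasing --- is a genuine subtlety, but the paper does not resolve it either: its proof opens with ``without loss of generality assume in round $t$ arm $i$ is pulled and $\alpha_t>0$,'' so it establishes the inequality only at such rounds and silently inherits exactly the monotonicity issue you identify when the lemma is later invoked at the horizon $T$ in Theorem~\ref{thm:egreedy-main}. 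In short, you have reproduced the paper's argument and been more explicit than the paper about its one loose end.
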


Finally, we upper bound the number of non-target arm $i$ pulls $N_{i}(T)$ for $i<K$. 
Recall the arm $i$ pulls are only the result of exploration rounds.
In round $t$ the exploration probability is $\eps_t$; if Bob explores, he chooses an arm uniformly at random.
We also lower bound the target arm pulls $N_K(T)$.
\begin{lem} \label{lem:Nitupperbound}
  Let $\dt<1/2$. 
  Suppose $T$ satisfy $\sum_{t=1}^T \eps_t \ge \fr{K}{e-2}\log(K/\dt)$.
  With probability at least $1-\delta$, for all non-target arms $i<K$,
  \begin{align*}
    N_i(T)
    < \sum_{t=1}^T \frac{\eps_t}{K} + \sqrt{3\sum_{s=1}^T\frac{\eps_t}{K}\log\frac{K}{\delta}}\,.
  \end{align*}
  and for the target arm $K$,
  \begin{align*}
    N_K(T) > T - \sum_{t=1}^T\eps_t - \sqrt{3\sum_{s=1}^T\eps_t\log\frac{K}{\delta}}\,.
  \end{align*}
\end{lem}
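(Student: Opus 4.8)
The plan is to exploit the structural fact, supplied by \lemref{lem:alwaysK}, that under event $E$ Bob pulls the target arm in \emph{every} exploitation round. Consequently a non-target arm $i<K$ can be selected only in an exploration round whose uniform draw happens to land on $i$, while the target arm $K$ is selected in every exploitation round (plus possibly some exploration rounds and its initialization pull). Writing $Z_t$ for the indicator that round $t$ is an exploration round and $U_t$ for the uniform-over-$[K]$ arm drawn when $Z_t=1$, the key observation is that $\{Z_t\}$ and $\{U_t\}$ are \emph{external} to the reward stream: they are independent of the pre-attack rewards $r^0_s$ and hence of event $E$. This lets me treat the counts as functions of fresh Bernoulli randomness. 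Concretely, up to its single initialization pull, $N_i(T)=\sum_{t>K}\one\{Z_t=1,\,U_t=i\}$ is a sum of independent $\mathrm{Bernoulli}(\eps_t/K)$ variables with mean $\sum_t \eps_t/K$, while the number of exploration rounds $R=\sum_t Z_t$ is a sum of independent $\mathrm{Bernoulli}(\eps_t)$ with mean $\sum_t \eps_t$; since every non-exploration round after initialization is an exploitation round giving arm $K$, we have $N_K(T)\ge (T-R)-O(K)$, so a one-sided upper bound on $R$ yields the stated lower bound on $N_K(T)$ (the $O(K)$ initialization terms are absorbed into $\sum_{t=1}^T\eps_t$).

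Second, I would apply a Chernoff bound for these sums of independent Bernoullis. The clean tool is the Bennett-type moment generating function estimate $\EE\, e^{s(S-\EE S)}\le \exp\bigl((e-2)\,(\EE S)\, s^2\bigr)$, valid for $s\in[0,1]$, which follows from the elementary inequality $e^s-1-s\le (e-2)s^2$ on $[0,1]$. Optimizing the exponent $-sa+(e-2)(\EE S)s^2$ over $s$ gives, for a deviation $a=\sqrt{3\,(\EE S)\log(K/\dt)}$, an upper tail of at most $\dt/K$. This is exactly where the hypothesis $\sum_{t=1}^T\eps_t\ge \fr{K}{e-2}\log(K/\dt)$ enters: it guarantees $\EE S$ is large enough that the optimizing tilt parameter $s^\star=a/(2(e-2)\EE S)$ stays in the admissible range $[0,1]$, so that the sub-Gaussian-type tail, rather than a cruder $s=1$ bound, is legitimate. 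I would apply this once to each $N_i(T)$, $i<K$ (with $\EE S=\sum_t\eps_t/K$), and once to $R$ (with $\EE S=\sum_t\eps_t$), reading off the two deviation terms $\sqrt{3(\sum_t\eps_t/K)\log(K/\dt)}$ and $\sqrt{3(\sum_t\eps_t)\log(K/\dt)}$ that appear in the statement.

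Finally I would collect the failure probabilities. Each of the $K-1$ non-target tail bounds and the single bound on $R$ fails with probability at most $\dt/K$, so a union bound over these $K$ events caps the total failure probability at $\dt$, giving the ``with probability at least $1-\dt$'' guarantee; on the complementary event all the bounds hold simultaneously, which is precisely the conclusion.

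The step I expect to be the main obstacle is the careful handling of the Chernoff bound when the deviation $a$ is of the same order as the mean $\EE S$ rather than much smaller: the familiar multiplicative Chernoff form, valid only for relative deviations at most $1$, does not apply directly, so one must work with the MGF estimate and keep the tilt parameter $s^\star$ inside $[0,1]$. This is precisely the role of the hypothesis $\sum_{t=1}^T\eps_t\ge\fr{K}{e-2}\log(K/\dt)$, and the footnote's remark about enlarging $\tilN$ and shrinking $\tilN_K$ is exactly the price of dropping it. A secondary, more routine concern is bookkeeping the $K$ deterministic initialization pulls and verifying the independence claim that $\{Z_t,U_t\}$ are unaffected by Alice's attack, so that this $1-\dt$ concentration event is independent of the event $E$ of \lemref{lem:band}; that independence is what lets \thmref{thm:egreedy-main} combine the two by a union bound into its $1-2\dt$ guarantee.
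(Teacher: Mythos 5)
Your proposal is correct and follows essentially the same route as the paper: identify each non-target arm's pulls with independent exploration Bernoullis of mean $\eps_t/K$ (using \lemref{lem:alwaysK} to rule out exploitation pulls), apply a Bernstein/Freedman-type bound (the paper cites Lemma~9 of \citet{agarwal14taming}, which is exactly the $(e-2)$ MGF estimate you derive by hand), use the hypothesis on $\sum_t\eps_t$ to keep the tilt parameter in $[0,1]$, and union bound over the $K$ arms. Your treatment of the target arm via the total exploration count $R$ is an equivalent reformulation of the paper's direct bound on the arm-$K$ indicators, and your explicit accounting of the reliance on event $E$ is if anything slightly more careful than the paper's.
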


We are now ready to prove Theorem~\ref{thm:egreedy-main}.
\begin{proof}
The theorem follows immediately from a union bound over \lemref{lem:alphaupperbound} and \lemref{lem:Nitupperbound} below. 
We add up the attack costs over $K-1$ non-target arms.
Then, we note that $N \beta(N)$ is increasing in $N$ so $N_i(T) \beta(N_i(T)) \le \tilN(T) \beta(\tilN(T))$.
Finally, by \lemref{lem:beta} in our supplementary material $\beta(N)$ is decreasing in $N$,  so $\beta(N_K(T)) \le \beta(\tilN_K(T))$.
\end{proof}

\section{Alice's Attack on UCB Bob}
\label{sec:ucb}
  
Recall that we assume rewards are $\sigma^2$-sub-Gaussian.  
Bob's UCB algorithm in its basic form often assumes rewards are bounded in $[0,1]$; we need to modify the algorithm to handle the more general sub-Gaussian rewards.  By choosing $\alpha=4.5$ and $\psi:\lambda\mapsto\frac{\sigma^2\lambda^2}{2}$ in the $(\alpha,\psi)$-UCB algorithm of \citet[Section~2.2]{bubeck12regret}, we obtain the following arm-selection rule:
\[
I_t = \begin{cases}
t, & \text{if $t \le K$} \\
\arg\max_i \left\{ \hmu_i(t-1) + 3\sigma\sqrt{\frac{\log t}{N_i(t-1)}} \right\}, & \text{otherwise.}
\end{cases}
\]

For the first $K$ rounds where Bob plays each of the $K$ arms once in an arbitrary order, Alice does not attack: $\alpha_t = 0$ for $t \le K$.  After that, attack happens only when $I_t \ne K$.  Specifically, consider any round $t>K$ where Bob pulls arm $i\ne K$.  It follows from the UCB algorithm that
\[
\hmu_i(t-1)+3\sigma\sqrt{\frac{\log t}{N_i(t-1)}} \ge \hmu_K(t-1)+3\sigma\sqrt{\frac{\log t}{N_K(t-1)}} \,.
\]
Alice attacks as follows.  She computes an attack $\alpha_t$ with the smallest absolute value, such that
\[
\hmu_i(t) \le \hmu_K(t-1) - 2 \beta(N_K(t-1)) - \Delta_0\,,
\]
where $\Delta_0 \ge 0$ is a parameter of Alice.
Since the post-attack empirical mean can be computed recursively by the following
\[
\hmu_i(t) = \frac{N_i(t-1)\hmu_i(t-1)+r_t^0 -\alpha_t}{N_i(t-1)+1}\,,
\]
where $r_t^0$ is the pre-attack reward; this enables us to write down in closed form Alice's attack:
\begin{equation}
\alpha_t = \Big[ N_i(t)\hmu^0_i(t) - \sum_{s\in \tau_i(t-1)}\alpha_s - N_i(t)\cdot\lt( \hmu_K(t-1) - 2  \beta(N_K(t-1)) - \Delta_0\rt)
\Big]_+ \,. \label{eqn:ucb-attack}
\end{equation}
For convenience, define $\alpha_t=0$ if $I_t=K$.
We now present the main theorem on Alice's cumulative attack cost against Bob who runs UCB.

\begin{thm} \label{thm:ucb-main}
Suppose $T \ge 2K$ and $\delta \le 1/2$.  Then, with probability at least $1-\delta$, Alice forces Bob to choose the target arm in at least
\begin{align*}
T- (K-1) \left(2 + \frac{9\sigma^2}{\Delta_0^2}\log T\right),
\end{align*}
rounds, using a cumulative attack cost at most
{\small
\begin{align*}
 \sum_{t=1}^T \alpha_t \le  \left(2 + \frac{9\sigma^2}{\Delta_0^2}\log T\right) \sum_{i<K} (\Delta_i + \Delta_0) 
+ \sigma(K-1) \sqrt{ 32(2 + \frac{9\sigma^2}{\Delta_0^2}\log T) \log\frac{\pi^2K(2 + \frac{9\sigma^2}{\Delta_0^2}\log T)^2}{3\delta}  }\,.
\end{align*}
}
\end{thm}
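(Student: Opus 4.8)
The plan is to run the whole argument on the event $E$ of \lemref{lem:band}, which holds with probability at least $1-\delta$; since Bob's UCB rule is deterministic there is no further randomness to control, which is why the stated confidence is $1-\delta$ rather than $1-2\delta$. Write $M \defeq 2 + \frac{9\sigma^2}{\Delta_0^2}\log T$ for the claimed per-arm pull budget. First I would record the defining consequence of the attack in \eqref{eqn:ucb-attack}: because $\alpha_t$ is the smallest non-negative shift enforcing $\hat\mu_i(t) \le \hat\mu_K(t-1) - 2\beta(N_K(t-1)) - \Delta_0$ (when the bracket is negative Alice sets $\alpha_t=0$ and the inequality already holds), this inequality is in force after \emph{every} post-initialization pull of a non-target arm $i$. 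Combined with $\hat\mu_K(t-1) < \mu_K + \beta(N_K(t-1))$ under $E$, it yields $\hat\mu_i(t-1) < \mu_K - \beta(N_K(t'-1)) - \Delta_0 < \mu_K - \Delta_0$ at any round $t$ whose preceding pull $t'$ of $i$ was post-initialization.

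Next I would bound the non-target pulls. Fix $i<K$ and a round $t$ at which Bob pulls $i$ with its previous pull $t'$ post-initialization. The selection rule gives $\hat\mu_i(t-1) + 3\sigma\sqrt{\log t / N_i(t-1)} \ge \hat\mu_K(t-1) > \mu_K - \beta(N_K(t-1))$, while the previous step gives $\hat\mu_i(t-1) < \mu_K - \beta(N_K(t'-1)) - \Delta_0$. Subtracting and using that $\beta$ is decreasing (\lemref{lem:beta}) with $N_K(t'-1)\le N_K(t-1)$ cancels the confidence widths and leaves $3\sigma\sqrt{\log t/N_i(t-1)} > \Delta_0$, i.e. $N_i(t-1) < 9\sigma^2\Delta_0^{-2}\log T$. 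Adding the one un-attacked initialization pull and the single uncontrolled first post-initialization pull gives $N_i(T) \le M$, whence $N_K(T) = T - \sum_{i<K} N_i(T) \ge T-(K-1)M$, the first claim.

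Then I would bound the cost arm by arm. Let $t^\dagger$ be the last round at which Alice actively attacks $i$ (if none, the cost is zero). Minimality of the attack forces the equality $\hat\mu_i(t^\dagger) = \hat\mu_K(t^\dagger-1) - 2\beta(N_K(t^\dagger-1)) - \Delta_0$, and since no attack follows, the total attack on $i$ equals $N_i(t^\dagger)\big(\hat\mu^0_i(t^\dagger) - \hat\mu_i(t^\dagger)\big)$. Substituting the equality and bounding $\hat\mu^0_i(t^\dagger) < \mu_i + \beta(N_i(t^\dagger))$, $\hat\mu_K(t^\dagger-1) > \mu_K - \beta(N_K(t^\dagger-1))$ from $E$ gives the UCB analogue of \lemref{lem:alphaupperbound},
\[
\sum_{s\in\tau_i(T)}\alpha_s < N_i(t^\dagger)\Big(\Delta_i + \Delta_0 + \beta(N_i(t^\dagger)) + 3\beta(N_K(t^\dagger-1))\Big)\,.
\]
To collapse this I would use $N_i(t^\dagger)\le M$, the fact that $N\beta(N)$ is increasing (so $N_i(t^\dagger)\beta(N_i(t^\dagger)) \le M\beta(M)$), and the comparison $N_K(t^\dagger-1)\ge N_i(t^\dagger)$ (so $\beta(N_K(t^\dagger-1))\le\beta(N_i(t^\dagger))$), turning the bracket into $M(\Delta_i+\Delta_0) + 4M\beta(M)$. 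Summing over the $K-1$ non-target arms and substituting $M\beta(M) = \sigma\sqrt{2M\log(\pi^2 K M^2/(3\delta))}$ with $4\sqrt{2}=\sqrt{32}$ reproduces the stated cost.

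The main obstacle is the comparison $N_K(t^\dagger-1)\ge N_i(t^\dagger)$ used in the final step. I would extract it from the selection rule at $t^\dagger$: since $\hat\mu_i(t^\dagger-1) < \mu_K-\Delta_0$ while $\hat\mu_K(t^\dagger-1) > \mu_K-\beta(N_K(t^\dagger-1))$, whenever the target arm has been sampled enough that $\beta(N_K(t^\dagger-1))\le\Delta_0$ we get $\hat\mu_i(t^\dagger-1) < \hat\mu_K(t^\dagger-1)$, so arm $i$ can win the UCB comparison only through a strictly larger exploration bonus, forcing $N_i(t^\dagger-1) < N_K(t^\dagger-1)$ and hence $N_i(t^\dagger)\le N_K(t^\dagger-1)$. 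The delicate part is the complementary early regime $\beta(N_K(t^\dagger-1))>\Delta_0$, where the confidence width still dominates $\Delta_0$; there one must argue separately that the rarely-pulled target's large bonus caps how far $i$ can outpace it, keeping $N_i(t^\dagger)\beta(N_K(t^\dagger-1))$ within the $M\beta(M)$ budget up to the constant absorbed into the ``$32$''. I expect this regime-splitting, rather than any of the routine concentration steps, to be where the real care is needed.
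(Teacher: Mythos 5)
Your overall route is the same as the paper's: work on the event $E$ of \lemref{lem:band} (probability at least $1-\delta$, no extra union bound since UCB is deterministic), bound each $N_i(T)$ both by $M=2+\frac{9\sigma^2}{\Delta_0^2}\log T$ and by $N_K$, bound the per-arm cost by $N_i\bigl(\Delta_i+\Delta_0+\beta(N_i)+3\beta(N_K)\bigr)$, and assemble using the monotonicity of $\beta(N)$ and of $N\beta(N)$. Most of your steps are correct, including the observation that the cumulative attack on arm $i$ equals $N_i(t^\dagger)\bigl(\hmu^0_i(t^\dagger)-\hmu_i(t^\dagger)\bigr)$ at the last active attack round, which recovers the paper's \lemref{lem:total-cost}.

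The one genuine gap is exactly the step you flag: the comparison $N_i(t^\dagger)\le N_K(t^\dagger-1)$. Your derivation covers only the regime $\beta(N_K(t^\dagger-1))\le\Delta_0$, and the proposed escape in the early regime (``absorbed into the $32$'') does not work: the $\sqrt{32}$ is exactly $4\sqrt{2}$ coming from the four $\beta$ terms, so there is no slack to absorb anything. A second soft spot is that your premise $\hmu_i(t^\dagger-1)<\mu_K-\Delta_0$ requires the \emph{previous} pull of $i$ to have been attacked, which fails when $t^\dagger$ is the first post-initialization pull of $i$. Both problems disappear if you do not discard the target arm's exploration bonus when invoking the selection rule. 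Keeping both bonuses, at any round $t$ with $I_t=i$ whose previous pull $t'$ of arm $i$ was post-initialization, one gets
\begin{equation*}
3\sigma\sqrt{\tfrac{\log t}{N_i(t-1)}}-3\sigma\sqrt{\tfrac{\log t}{N_K(t-1)}}\;\ge\;\hmu_K(t-1)-\hmu_i(t')\;\ge\;\hmu_K(t-1)-\hmu_K(t')+2\beta(N_K(t'))+\Delta_0\;\ge\;\Delta_0>0\,,
\end{equation*}
where the last step uses $E$ on arm $K$ at times $t-1$ and $t'$ together with \lemref{lem:beta}. Positivity of the left-hand side immediately gives $N_i(t-1)<N_K(t-1)$, hence $N_i(t)\le N_K(t)$, with no regime split; and since the subtracted bonus is positive, the same display also yields your pull-count bound $N_i(t-1)<\frac{9\sigma^2}{\Delta_0^2}\log t$. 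This is precisely the paper's \lemref{lem:ni-upper-bound} (the residual case $N_i(t)\le 2$ is handled trivially using $t\ge 2K$). With that comparison secured, the rest of your assembly goes through as written.
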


While the bounds in the theorem are somewhat complicated, the next corollary is more interpretable and follows from a straightforward calculation. 
Specifically, we have the following by straightforward calculation:
\begin{cor} \label{cor:ucb-main}
Inherit the assumptions in \thmref{thm:ucb-main} and fix $\delta$.  Then, the total number of non-target arm pulls is
\[
O\left(K + \frac{K\sigma^2}{\Delta_0^2}\log T\right)\,,
\]
and the cumulative attack cost is
{\small
\begin{align*}
  \widehat{O}\Bigg(\lt(1 + \frac{\sigma^2}{\Delta_0^2}\log T \rt) \sum_{i<K} (\Delta_i+\Delta_0) + \sigma K\cdot\lt(1 + \fr{\sigma }{\Delta_0}\sqrt{\log T}\rt)\sqrt{\log\lt(1+\fr{K\sigma}{\Delta_0}\rt)}\Bigg)\,,
\end{align*}
} \hspace{-2mm}
where $\widehat{O}$ ignores $\log \log(T)$ factors.
\end{cor}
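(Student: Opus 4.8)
The plan is to treat everything as an algebraic simplification of the two quantities already established in \thmref{thm:ucb-main}, holding $\delta$ fixed and letting $T$ grow. Throughout I would abbreviate the recurring factor as $M \defeq 2 + \fr{9\sigma^2}{\Delta_0^2}\log T$, since both the arm-pull count and the cost bound are built from it. For the count of non-target pulls, the theorem guarantees at least $T - (K-1)M$ target pulls, so the non-target pulls number at most $(K-1)M$; expanding $(K-1)\lt(2 + \fr{9\sigma^2}{\Delta_0^2}\log T\rt)$ and dropping constants immediately gives $O\lt(K + \fr{K\sigma^2}{\Delta_0^2}\log T\rt)$. This part is essentially bookkeeping.

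For the cost I would handle the two summands of the theorem's bound separately. The first summand $M\sum_{i<K}(\Delta_i+\Delta_0)$ needs only the rewrite $M = O\lt(1 + \fr{\sigma^2}{\Delta_0^2}\log T\rt)$, which matches the first term in the corollary verbatim. The second summand $\sigma(K-1)\sqrt{32\,M\,\log\fr{\pi^2 K M^2}{3\delta}}$ I would factor as $\sigma(K-1)\sqrt{32}\cdot\sqrt{M}\cdot\sqrt{\log\fr{\pi^2 K M^2}{3\delta}}$ and bound the two square-root factors one at a time. Using subadditivity $\sqrt{a+b}\le\sqrt a+\sqrt b$ on $M$ yields $\sqrt M = O\lt(1 + \fr{\sigma}{\Delta_0}\sqrt{\log T}\rt)$, the desired middle factor.

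The delicate step, and the one I expect to be the main obstacle, is the nested logarithm $\log\fr{\pi^2 K M^2}{3\delta}$, which must collapse to $\log\lt(1 + \fr{K\sigma}{\Delta_0}\rt)$ up to the $\log\log T$ slack absorbed by $\widehat O$. With $\delta$ fixed this log splits as $O(1) + \log K + 2\log M$, and the real work is bounding $\log M = \log\lt(2 + \fr{9\sigma^2}{\Delta_0^2}\log T\rt)$. Factoring $M \le 2\lt(1 + \fr{9\sigma^2}{2\Delta_0^2}\rt)(1 + \log T)$ and taking logs, I would peel off $\log(1 + \log T) = O(\log\log T)$ (dropped by $\widehat O$) and retain $\log\lt(1 + \fr{\sigma^2}{\Delta_0^2}\rt) = O\lt(\log\lt(1 + \fr{\sigma}{\Delta_0}\rt)\rt)$; combining this with the stray $\log K$ via $\log K + \log\lt(1 + \fr{\sigma}{\Delta_0}\rt) = O\lt(\log\lt(1 + \fr{K\sigma}{\Delta_0}\rt)\rt)$ gives the claimed $\sqrt{\log(1 + K\sigma/\Delta_0)}$ factor. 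I would flag that this last combination is clean in the primary regime $\sigma/\Delta_0 = \Omega(1)$ and is being used only as an interpretable upper bound, so one should record which crude inequalities are invoked rather than assert tightness in every corner of parameter space. Multiplying the three cleaned factors $\sigma K$, $\lt(1 + \fr{\sigma}{\Delta_0}\sqrt{\log T}\rt)$, and $\sqrt{\log(1 + K\sigma/\Delta_0)}$ together, then adding back the simplified first summand, produces exactly the corollary's cost expression and completes the calculation.
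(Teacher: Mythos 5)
Your calculation is correct and is exactly the ``straightforward calculation'' from Theorem~\ref{thm:ucb-main} that the paper invokes without writing out: bound the non-target pulls by $(K-1)M$ with $M = 2 + \tfrac{9\sigma^2}{\Delta_0^2}\log T$, split the cost into the two summands, and simplify $\sqrt{M}$ and the nested logarithm. You also correctly flag the one delicate point --- absorbing $\log K$ and the fixed-$\delta$ constant into $\log\bigl(1+\tfrac{K\sigma}{\Delta_0}\bigr)$ is only clean when $K\sigma/\Delta_0 = \Omega(1)$ --- which is a looseness in the corollary's statement itself rather than a gap in your argument.
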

We observe that a larger $\Delta_0$ decreases non-target arm pulls (i.e. 
a more effective attack).
The effect diminishes when $\Delta_0 > \sigma\sqrt{\log T}$ since $\frac{K\sigma^2}{\Delta_0^2}\log T<K$.
Thus there is no need for Alice to choose a larger $\Delta_0$.
By choosing $\Delta_0=\Theta(\sigma)$, the cost is $\widehat O( \sum_{i<K} \Delta_i \log T +  \sigma K \log T)$.
This is slightly worse than the cost of attacking $\eps$-greedy where $\sigma$ is multiplied by $\sqrt{\log T}$ rather than $\log T$.
However, we find that a stronger attack is possible when the time horizon $T$ is fixed and known to Alice ahead of time (i.e., the fixed budget setting).
One can show that this choice $\Delta_0 = \Theta\left(\sigma\sqrt{\log T}\right)$ minimizes the cumulative attack cost, which is $\widehat O\left( K \sigma \sqrt{\log T}\right)$.
This is a very strong attack since the dominating term w.r.t. $T$ does not depend on $\sum_{i<K} \Delta_i$; in fact the cost associated with $\sum_{i<K} \Delta_i$ does not grow with $T$ at all.
This means that under the fixed budget setting algorithm-specific attacks can be better than the oracle attack that is algorithm-independent.
Whether the same is true in the anytime setting (i.e., $T$ is unknown ahead of time) is left as an open problem.

For the proof of Theorem~\ref{thm:ucb-main} we use the following two lemmas.
\begin{lem} \label{lem:ni-upper-bound}
Assume event $E$ holds and $\delta \le 1/2$.  Then, for any $i<K$ and any $t \ge 2K$, we have
\begin{equation}
N_i(t) \le \min \{N_K(t), 2 + \frac{9\sigma^2}{\Delta_0^2}\log t \}\,.  \label{eqn:ni-ub}
\end{equation}
\end{lem}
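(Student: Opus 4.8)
The plan is to run the standard UCB ``index comparison'' argument, but anchored to the fact that once a non-target arm $i$ has been attacked, its post-attack empirical mean is pinned a full $\Delta_0$ below that of arm $K$. Fix a round $t>K$ with $I_t=i\ne K$ and let $s\le t-1$ be the most recent round in which $i$ was pulled. First I would prove the ``mean-separation'' estimate
\[
\hmu_i(t-1) < \hmu_K(t-1) - \Delta_0 .
\]
This combines three ingredients. The attack rule~\eqref{eqn:ucb-attack} at round $s$ guarantees $\hmu_i(s) \le \hmu_K(s-1) - 2\beta(N_K(s-1)) - \Delta_0$. Event $E$, together with the fact that arm $K$ is never attacked (so $\hmu_K=\hmu^0_K$), gives $\hmu_K(s-1) < \mu_K + \beta(N_K(s-1))$ and $\hmu_K(t-1) > \mu_K - \beta(N_K(t-1))$. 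Finally, $N_K(s-1)\le N_K(t-1)$ and the monotonicity of $\beta$ (\lemref{lem:beta}) yield $\beta(N_K(t-1))-\beta(N_K(s-1))\le 0$. Chaining these three facts, and using $\hmu_i(t-1)=\hmu_i(s)$ since $i$ is untouched between $s$ and $t$, produces the displayed separation.

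Next I would exploit that $i$ was actually \emph{selected} at round $t$. The UCB rule states
\[
\hmu_i(t-1) + 3\sigma\sqrt{\tfrac{\log t}{N_i(t-1)}} \ge \hmu_K(t-1) + 3\sigma\sqrt{\tfrac{\log t}{N_K(t-1)}} ,
\]
and substituting the separation estimate turns this into
\[
3\sigma\sqrt{\tfrac{\log t}{N_i(t-1)}} > \Delta_0 + 3\sigma\sqrt{\tfrac{\log t}{N_K(t-1)}} .
\]
Both halves of the $\min$ fall out of this single inequality. Dropping the nonnegative term $3\sigma\sqrt{\log t/N_K(t-1)}$ and squaring gives $N_i(t-1) < 9\sigma^2\log t/\Delta_0^2$; keeping it and using $\Delta_0\ge0$ gives $\sqrt{1/N_i(t-1)} > \sqrt{1/N_K(t-1)}$, i.e. $N_i(t-1) < N_K(t-1)$. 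Since arm $i$ gains exactly one pull at round $t$ while $N_K$ does not decrease, these upgrade to $N_i(t)\le N_K(t)$ and $N_i(t) < 1 + 9\sigma^2\log t/\Delta_0^2$ at every attacked selection of $i$, and between pulls of $i$ the counts move only in the favorable direction, so the bounds propagate by induction.

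Finally I would account for the $K$ initialization rounds, which is the source of the additive ``$2$'' and the reason for the hypothesis $t\ge 2K$. The separation estimate needs $i$'s previous pull $s$ to already be an attacked round ($s>K$); at the \emph{first} re-pull of $i$ after initialization its empirical mean is the raw, un-dragged reward, for which event $E$ only controls $\hmu_i$ by $\mu_i+\beta(1)$, and $\mu_i$ may well exceed $\mu_K$. I would therefore treat the initialization pull and this first re-pull as a budget of at most two ``free'' pulls, after which every further selection obeys $N_i(t-1) < 9\sigma^2\log t/\Delta_0^2$, giving $N_i(t)\le 2 + 9\sigma^2\log t/\Delta_0^2$; the condition $t\ge 2K$ supplies enough post-initialization rounds that the comparison $N_i(t)\le N_K(t)$ also survives this transient. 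I expect this initialization bookkeeping, rather than the clean index-comparison core, to be the main obstacle, since it is precisely where the additive constant and the $t\ge 2K$ hypothesis must be pinned down.
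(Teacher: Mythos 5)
Your proposal is correct and follows essentially the same route as the paper: anchor at arm $i$'s previous (attacked) pull, use event $E$ plus the monotonicity of $\beta$ to show $\hmu_i$ stays at least $\Delta_0$ below $\hmu_K$, and then read both halves of the $\min$ off the UCB selection inequality, handling the at-most-two unattacked pulls separately (the paper disposes of the $N_i(t)\le 2$ case up front using $t\ge 2K$, exactly your ``two free pulls'' bookkeeping). No substantive differences.
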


\begin{lem} \label{lem:total-cost}
Assume event $E$ holds and $\delta \le 1/2$.  
Then, at any round $t \ge 2K$, the cumulative attack cost to any fixed arm $i<K$ can be bounded as:
\[
\sum_{s \in \tau_i(t)} \alpha_s \le N_i(t) \Big(\Delta_i + \Delta_0 + 4 \beta(N_i(t)) \Big) \,.
\]
\end{lem}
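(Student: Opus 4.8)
The plan is to exploit the fact that Alice always applies the \emph{smallest} attack that pushes $\hat\mu_i$ down to the target level, so the cumulative cost never overshoots. Concretely, let $t''$ be the last round in $\tau_i(t)$ at which the attack is strictly positive ($\alpha_{t''}>0$); if no such round exists the cost is zero and the bound holds trivially since $\Delta_i,\Delta_0,\beta\ge 0$. Because $\alpha_s=0$ at every pull of arm $i$ after $t''$, the cumulative cost is frozen: $\sum_{s\in\tau_i(t)}\alpha_s=\sum_{s\in\tau_i(t'')}\alpha_s$. Writing the post-attack mean as $\hat\mu_i(t'')=\hat\mu^0_i(t'')-N_i(t'')^{-1}\sum_{s\in\tau_i(t'')}\alpha_s$ and using that at an active round the defining constraint \eqref{eqn:ucb-attack} binds, $\hat\mu_i(t'')=\hat\mu_K(t''-1)-2\beta(N_K(t''-1))-\Delta_0$, I would obtain the exact identity
\[
\sum_{s\in\tau_i(t)}\alpha_s = N_i(t'')\Big(\hat\mu^0_i(t'') - \hat\mu_K(t''-1) + 2\beta(N_K(t''-1)) + \Delta_0\Big).
\]

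Next I would bound the bracket using event $E$. Since Alice never attacks arm $K$, its empirical mean is unmanipulated, so $\hat\mu_K(t''-1)=\hat\mu^0_K(t''-1)>\mu_K-\beta(N_K(t''-1))$, while $\hat\mu^0_i(t'')<\mu_i+\beta(N_i(t''))$. Substituting and using $\mu_i-\mu_K\le\Delta_i$ (by definition of $\Delta_i=\Delta^0_i$) collapses the bracket to at most $\Delta_i+\Delta_0+\beta(N_i(t''))+3\beta(N_K(t''-1))$. Because arm $i\ne K$ is pulled at $t''$ we have $N_K(t''-1)=N_K(t'')\ge N_i(t'')$ by \lemref{lem:ni-upper-bound}, and since $\beta$ is decreasing (\lemref{lem:beta}) each $\beta(N_K(t''-1))$ term is at most $\beta(N_i(t''))$, giving the single arm-$i$ width $4\beta(N_i(t''))$, hence $\sum_{s\in\tau_i(t)}\alpha_s < N_i(t'')\big(\Delta_i+\Delta_0+4\beta(N_i(t''))\big)$.

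Finally I would transfer this from $t''$ to $t$ by monotonicity: $N_i(t'')\le N_i(t)$ handles the $\Delta_i+\Delta_0$ factor directly, while for the noise term I use that $N\beta(N)$ is increasing, so $N_i(t'')\beta(N_i(t''))\le N_i(t)\beta(N_i(t))$, yielding the claimed bound $N_i(t)(\Delta_i+\Delta_0+4\beta(N_i(t)))$.

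The hard part will be conceptual rather than computational: recognizing the ``no overshoot'' structure, namely that the cumulative cost is governed entirely by the last active attack, where the post-attack mean equals the target \emph{exactly} rather than merely lying below it. Without this observation one only extracts a useless lower bound on the cost from the fact that the post-attack mean sits below the target. The one technical point I would check carefully is that \lemref{lem:ni-upper-bound} is applicable at $t''$ (i.e.\ $t''\ge 2K$, or else a direct argument for the few small rounds), since that is what licenses the collapse $\beta(N_K(t''-1))\le\beta(N_i(t''))$; the remaining $\beta$ and counting manipulations are routine.
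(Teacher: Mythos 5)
Your proposal is correct and follows essentially the same route as the paper: reduce to the last round where arm $i$ is actively attacked (the paper phrases this as ``the desired upper bound is increasing in $t$, so it suffices to prove it when $I_t=i$ and $\alpha_t>0$''), use the fact that the constraint in \eqref{eqn:ucb-attack} binds with equality there to get the exact identity for the cumulative cost, then apply event $E$, $N_i(t)\le N_K(t)=N_K(t-1)$ from \lemref{lem:ni-upper-bound}, and the monotonicity of $\beta$ from \lemref{lem:beta}. Your explicit transfer from $t''$ to $t$ via monotonicity of $N\beta(N)$, and your flag about checking \lemref{lem:ni-upper-bound}'s applicability at small $t''$, are just more careful spellings of steps the paper leaves implicit.
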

\begin{proof}[Proof of \thmref{thm:ucb-main}]
Suppose event $E$ holds.
The bounds are direct consequences of Lemmas~\ref{lem:total-cost} and \ref{lem:ni-upper-bound} below, by summing the corresponding upper bounds over all non-target arms $i$.  
Specifically, the number of target arm pulls is $T-\sum_{i<K} N_i(T)$, and the cumulative attack cost is
$\sum_{t=1}^T \alpha_t = \sum_{i<K}\sum_{t\in \tau_i(T)}\alpha_t$.
Since event $E$ is true with probability at least $1-\delta$ (\lemref{lem:band}), the bounds also hold with probability at least $1-\delta$.
\vspace{-1ex}
\end{proof}

\section{Simulations}
\label{sec:sim}
In this section, we run simulations on attacking $\epsilon$-greedy and UCB algorithms to illustrate our theoretical findings.

\textbf{Attacking $\epsilon$-greedy}\quad
The bandit has two arms.
The reward distributions of arms 1 and 2 are $\mathcal{N}(\Delta_1,\sigma^2)$ and $\mathcal{N}(0,\sigma^2)$, respectively, with $\Delta_1>0$. Alice's target arm is arm 2. We let $\delta=0.025$. 
Bob's exploration probability decays as $\epsilon_t=\frac{1}{t}$. 
We run Alice and Bob for $T=10^5$ rounds; this forms one trial. 
We repeat $1000$ trials. 

In Figure~\ref{exp:egreedy}\subref{exp:epsilon-cost-delta}, we fix $\sigma=0.1$ and show Alice's cumulative attack cost $\sum_{s=1}^t |\alpha_s|$ for different $\Delta_1$ values. 
Each curve is the average over 1000 trials.
These curves demonstrate that Alice's attack cost is proportional to $\log t$ as predicted by Corollary~\ref{cor:sublinear}.
As the reward gap $\Delta_1$ becomes larger, more attack is needed to reduce the reward of arm 1, and the slope increases.

Furthermore, note that $\sum_{t=1}^T |\alpha_t|= \widehat O\left(\Delta_1 \log T + \sigma \sqrt{\log T} \right)$. Ignoring $\log\log T$ terms, we have $\sum_{t=1}^T |\alpha_t| \le C(\Delta_1 \log T + \sigma \sqrt{\log T})$ for some constant $C>0$ and large enough $T$. Therefore, $\log\left(\sum_{t=1}^T |\alpha_t|\right) \le \max\{\log\log T + \log \Delta_1 , \tfrac{1}{2} \log\log T + \log \sigma \}+\log C$.
We thus expect the log-cost curve as a function of $\log\log T$ to behave like the maximum of two lines, one with slope $1/2$ and the other with slope $1$.
Indeed, we observe such a curve in Figure~\ref{exp:egreedy}\subref{exp:epsilon-cost-sigma} where we fix $\Delta_1=1$ and vary $\sigma$.
All the slopes eventually approach 1, though larger $\sigma$'s take a longer time.
This implies that the effect of $\sigma$ diminishes for large enough $T$, which was predicted by Corollary~\ref{cor:sublinear}.

In Figure~\ref{exp:egreedy}\subref{exp:epsilon-TAP}, we compare the number of target arm (the suboptimal arm 2) pulls with and without attack. 
This experiment is with $\Delta_1=0.1$ and $\sigma=0.1$.
Alice's attack dramatically forces Bob to pull the target arm.
In $10000$ rounds, Bob is forced to pull the target arm $9994$ rounds with the attack,  compared to only $6$ rounds if Alice was not present.

\begin{figure}[t]
\centering
\subfloat[Attack cost $\sum_{s=1}^t |\alpha_s|$ as $\Delta_1$ varies]{\label{exp:epsilon-cost-delta}
\includegraphics[width=0.3\textwidth, height=0.22\textwidth]{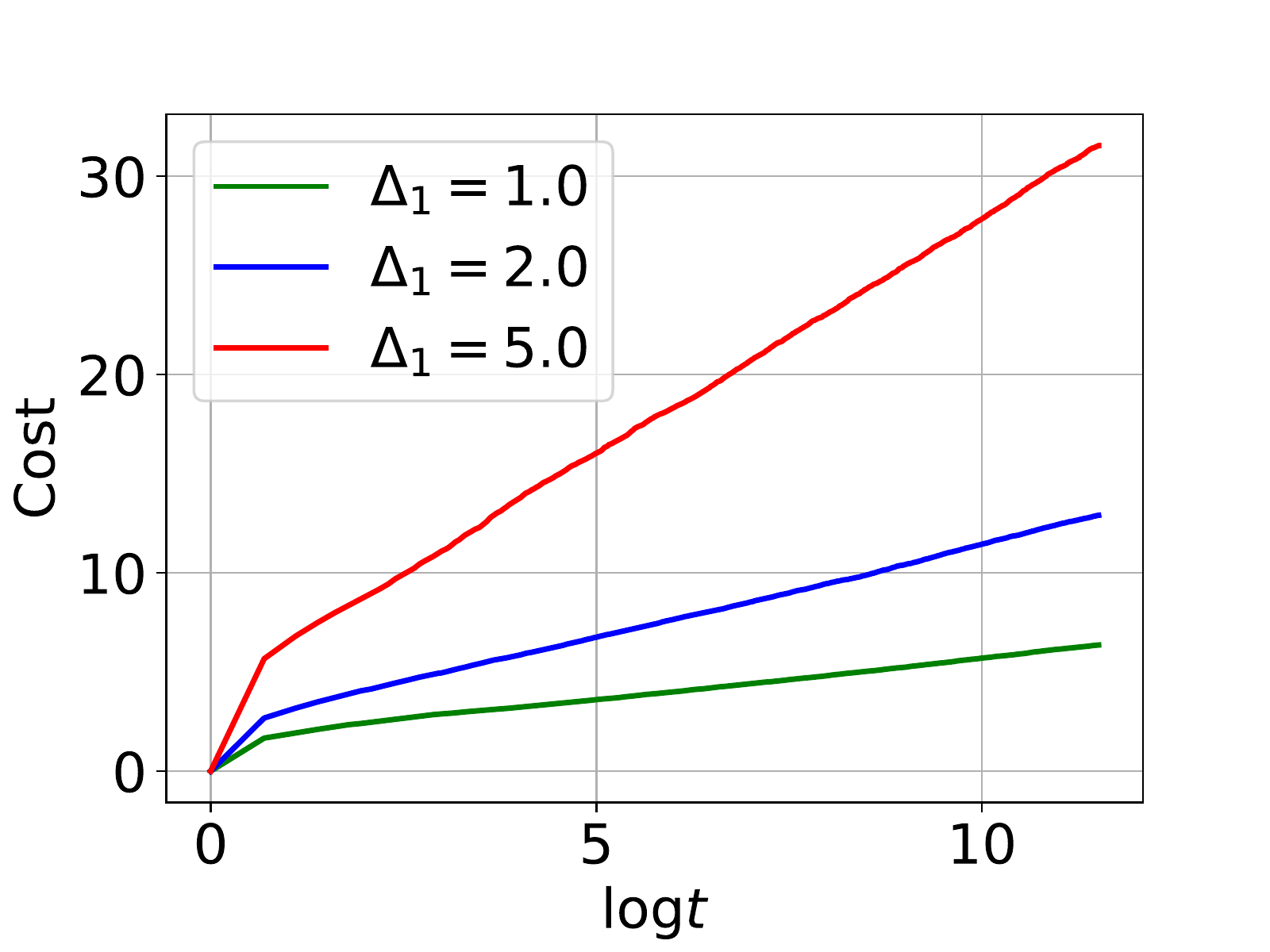}
}
\hspace{1em}
\subfloat[Attack cost as $\sigma$ varies; dotted lines depict slope 1/2 and 1 for comparison.]{\label{exp:epsilon-cost-sigma}
\includegraphics[width=0.3\textwidth, height=0.22\textwidth]{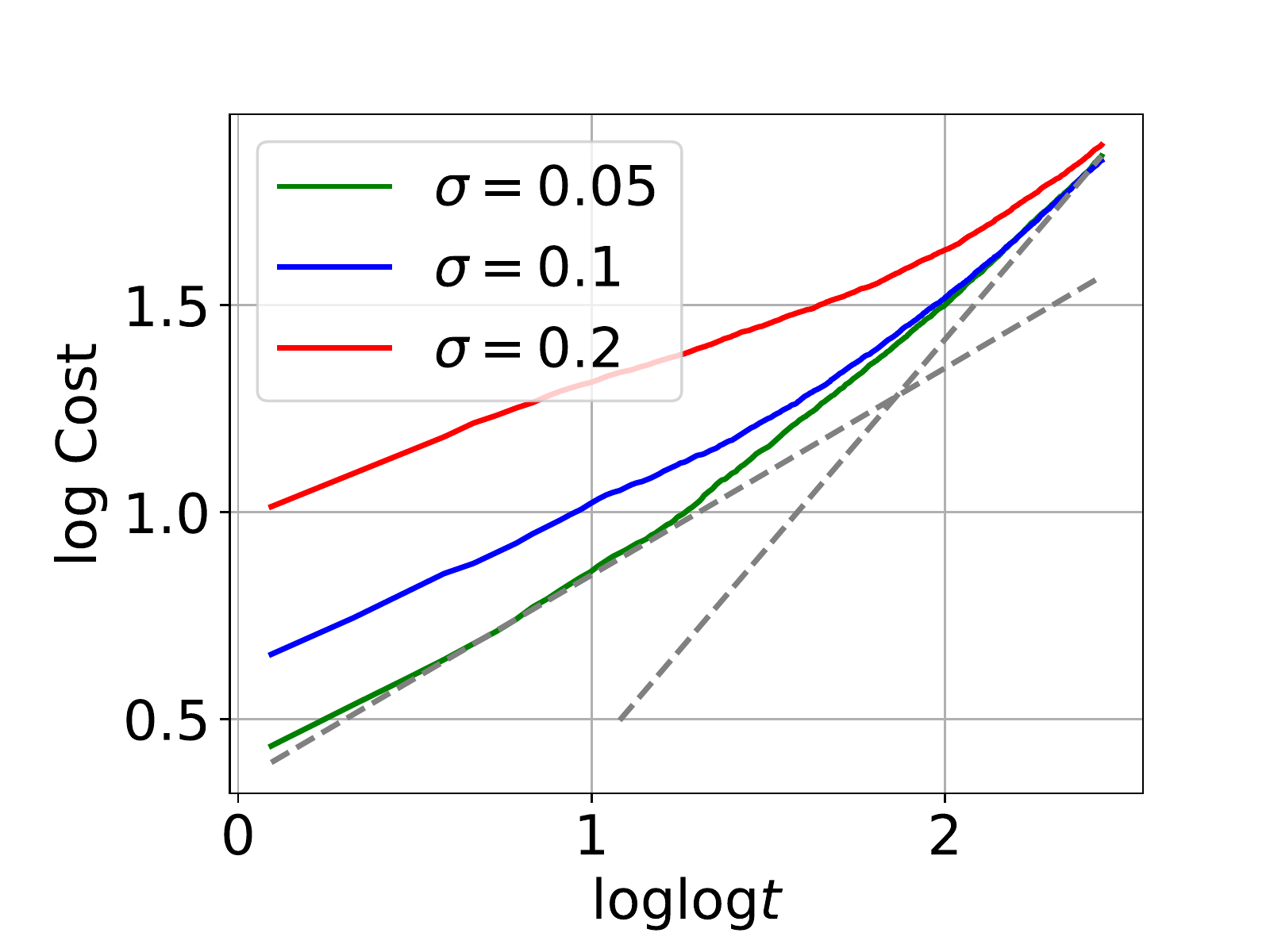}
}
\hspace{1em}
\subfloat[Target arm pulls $N_K(t)$]{\label{exp:epsilon-TAP}
\includegraphics[width=0.3\textwidth, height=0.22\textwidth]{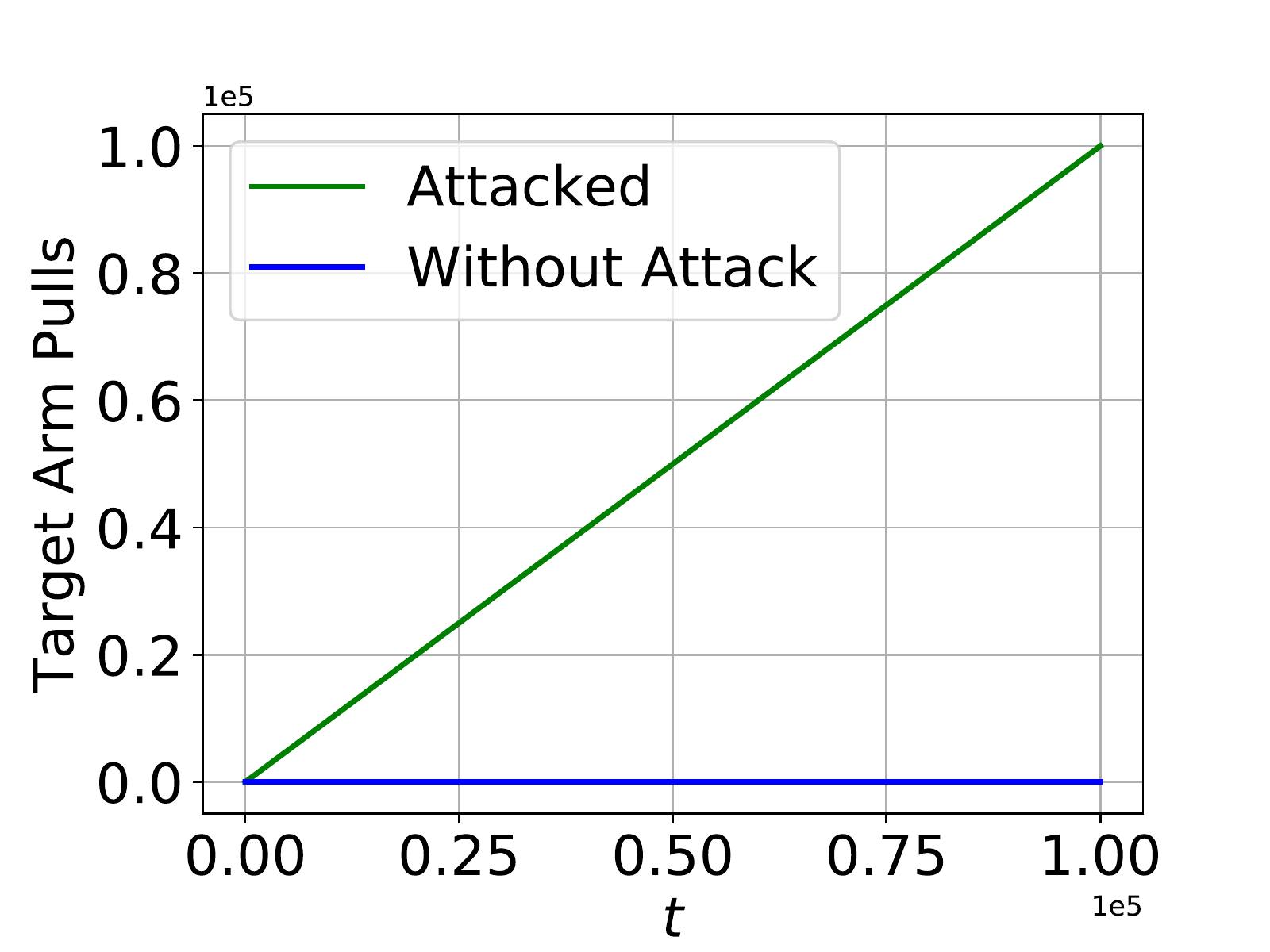}
}
\caption{Attack on $\epsilon$-greedy bandit.}\label{exp:egreedy}
\end{figure}

\textbf{Attacking UCB}\quad
The bandit has two arms. The reward distributions are the same as the $\epsilon$-greedy experiment. 
We let $\delta=0.05$.
To study how $\sigma$ and $\Delta_0$ affects the cumulative attack cost, we perform two groups of experiments.
In the first group, we fix $\sigma=0.1$ and vary Alice's free parameter $\Delta_0$ while in the second group, we fix $\Delta_0=0.1$ and vary $\sigma$. 
We perform $100$ trials with $T=10^7$ rounds.

Figure~\ref{exp:UCB_cost}\subref{exp:UCB_cost_VD} shows Alice's cumulative attack cost as $\Delta_0$ varies. 
As $\Delta_0$ increases, the cumulative attack cost decreases. 
In Figure~\ref{exp:UCB_cost}\subref{exp:UCB_cost_VS}, we show the cost as $\sigma$ varies. 
Note that for large enough $t$, the cost grows almost linearly with $\log t$, which is implied by Corollary~\ref{cor:ucb-main}. In both figures, there is a large attack near the beginning, after which the cost grows slowly. 
This is because the initial attacks drag down the empirical average of non-target arms by a large amount, such that the target arm appears to have the best UCB for many subsequent rounds.  
Figure~\ref{exp:UCB_cost}\subref{exp:UCB_TAP} again shows that Alice's attack forces Bob to pull the target arm: with attack Bob is forced to pull the target arm $10^7-2$ times, compared to only $156$ times without attack.

\begin{figure*}
\centering
\subfloat[Attack cost $\sum_{s=1}^t \alpha_s$ as $\Delta_0$ varies]{
\includegraphics[width=0.3\textwidth, height=0.22\textwidth]{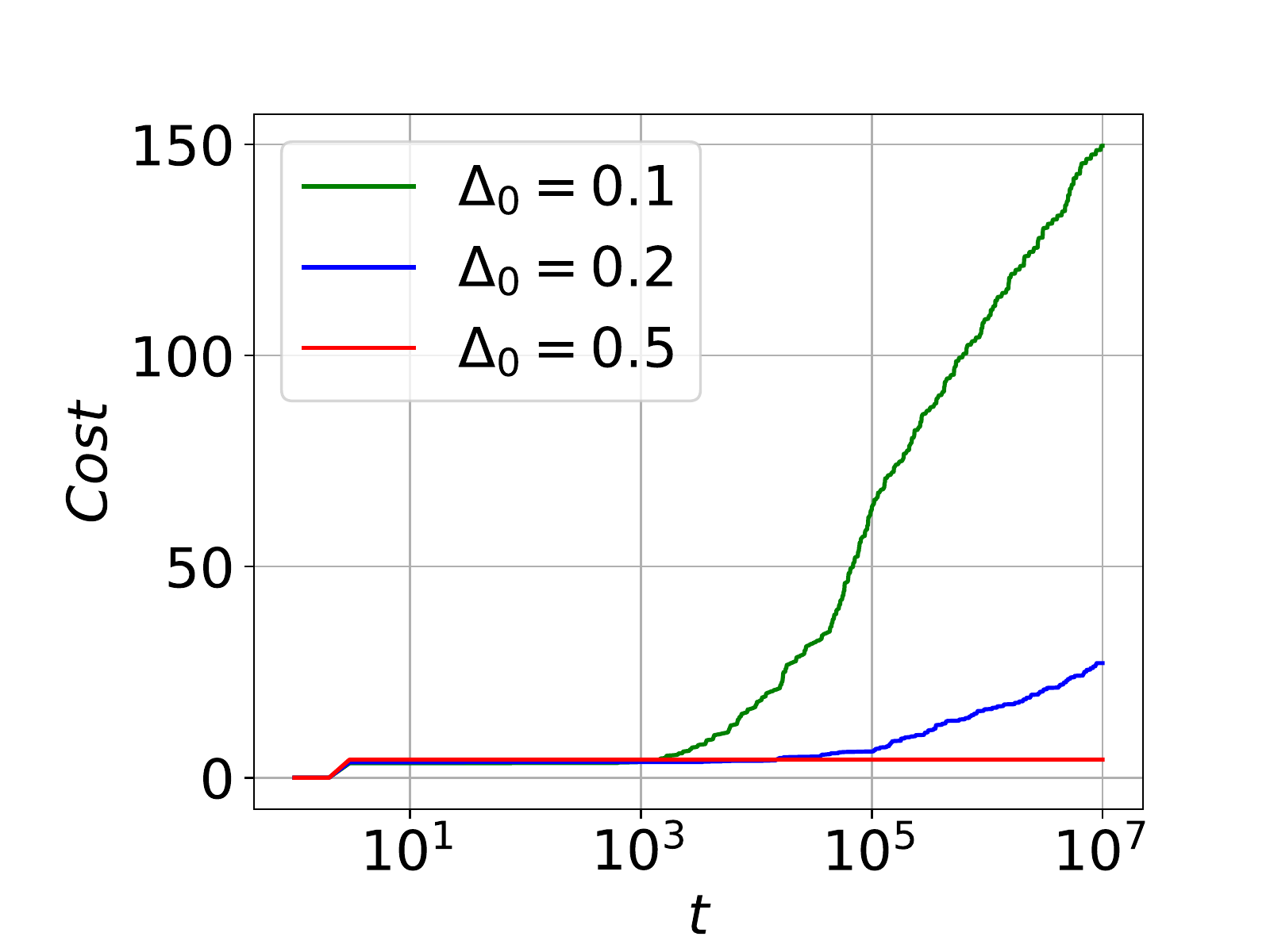}\label{exp:UCB_cost_VD}
}
\hspace{1em}
\subfloat[Attack cost as $\sigma$ varies]
{
\includegraphics[width=0.3\textwidth, height=0.22\textwidth]{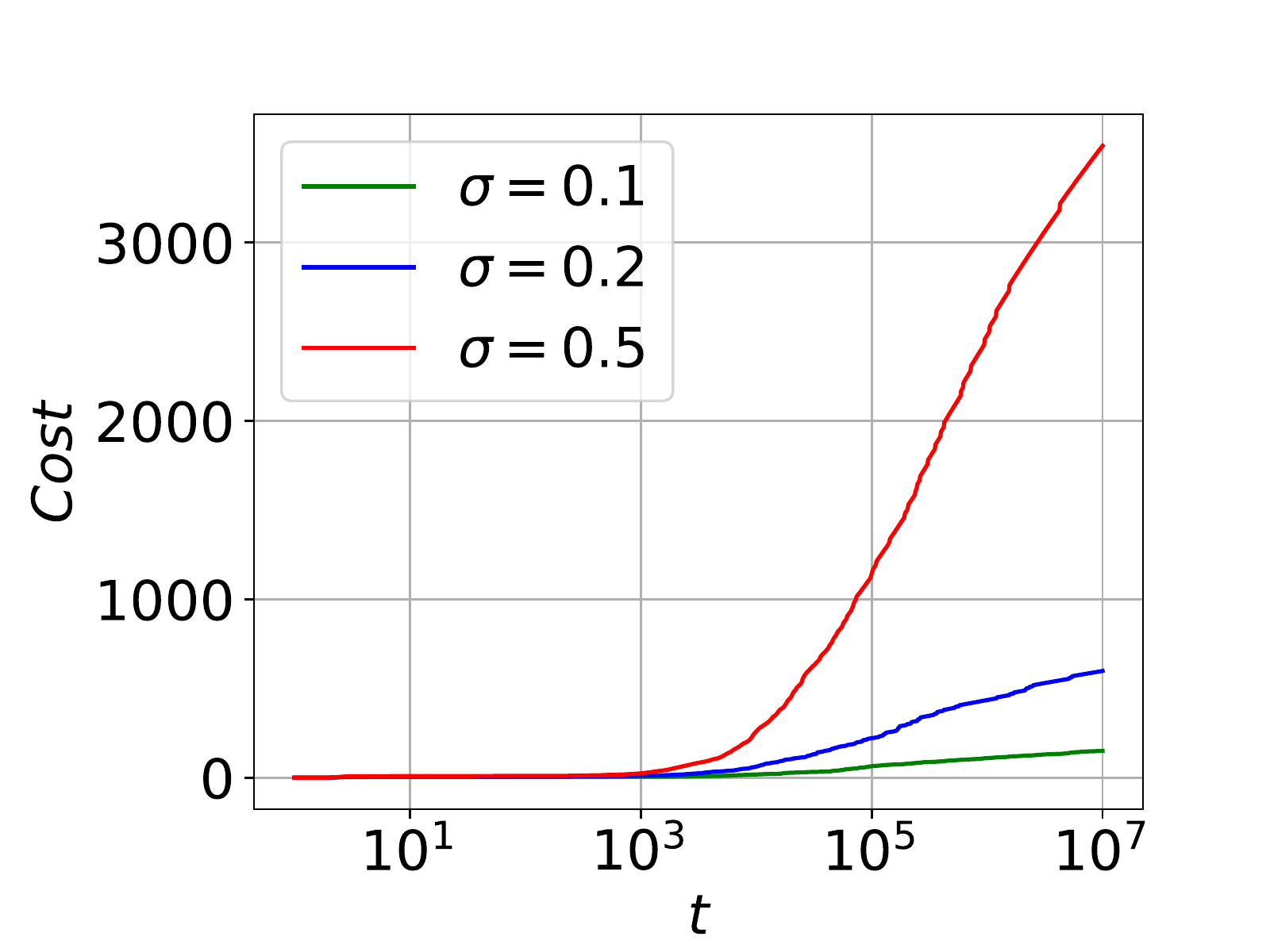}\label{exp:UCB_cost_VS}
}
\hspace{1em}
\subfloat[Target arm pulls $N_K(t)$]
{
\includegraphics[width=0.3\textwidth, height=0.22\textwidth]{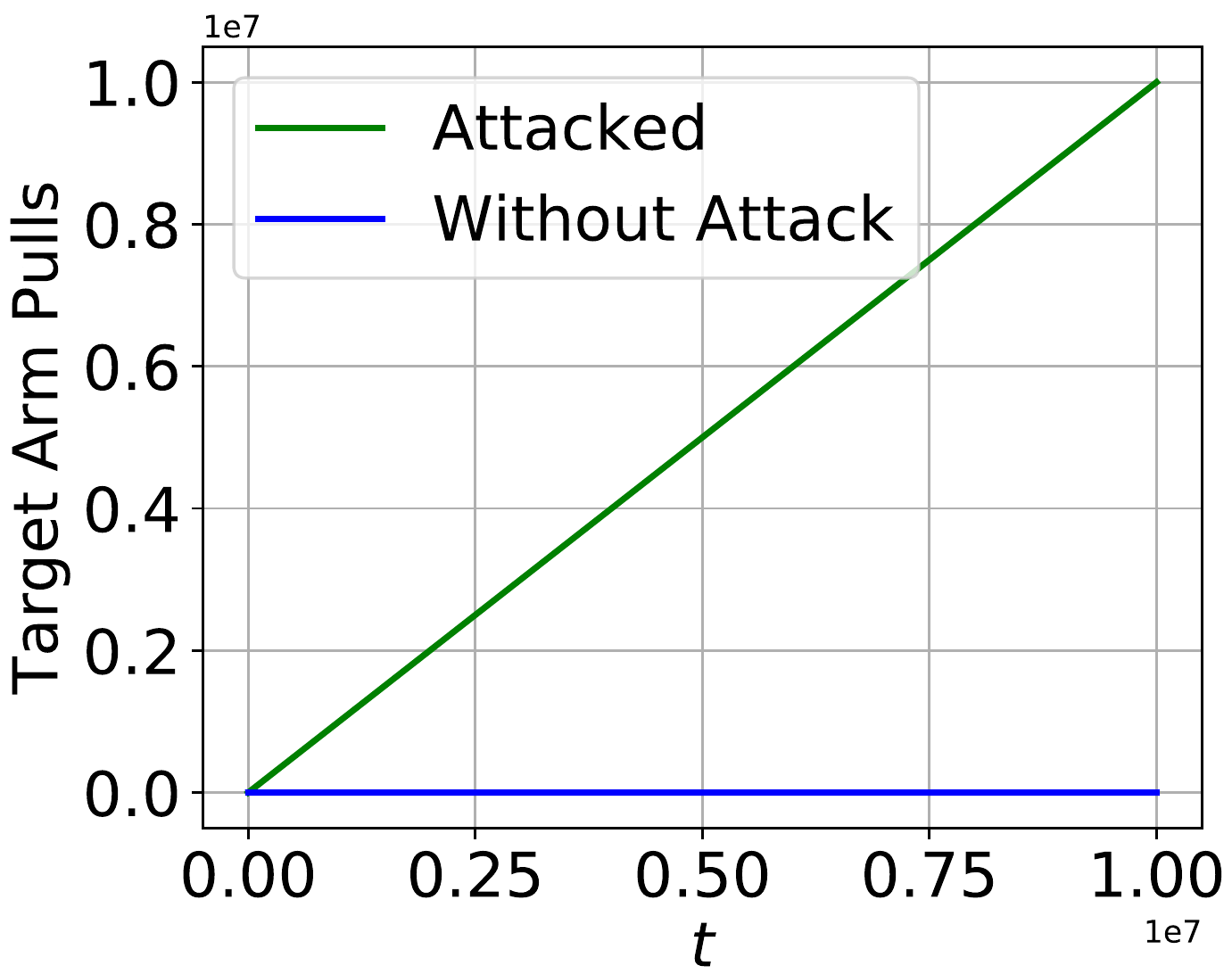}
\label{exp:UCB_TAP}
}
\caption{Attack on UCB learner.}\label{exp:UCB_cost}
\end{figure*}

\section{Related Work}
\label{sec:related}

The literature on general adversarial learning is vast and covers ethics, safety, fairness, and legal concerns; see e.g.~\citet{joseph_nelson_rubinstein_tygar_2018} and \citet{goodfellow2014explaining}.
Related to MAB, there has been empirical evidence that suggests adversarial attacks can be quite effective, even in the more general multi-step reinforcement learning problems,
as opposed to the bandit case considered in this paper.  The learned policy may be lured to visit certain target states when adversarial examples are driven~\cite{lin17tatics}, or have inferior generalization ability when training examples are corrupted~\cite{huang17adversarial}.
There are differences, though.  In the first, non-stochastic setting~\cite{auer02nonstochastic,evendar09online}, the reward is generated by an adversary instead of a stationary, stochastic process.  However, the reward observed by the learner is still a \emph{real} reward, in that the learner is still interested in maximizing it, or more precisely, minimizing some notion of regret in reference to some reference policy~\citep{bubeck12regret}.  Another related problem is reward shaping (e.g., \citet{dorigo97robot}), where the reward received by the learner is modified, as in our paper.  However, those changes are typically done to \emph{help} the learner in various ways (such as promoting exploration), and are designed in a way not to change the optimal policy the learner eventually converges to~\cite{ng99policy}.

A concurrent work by~\citet{lykouris18stochastic} considers a complementary problem to ours. They propose a randomized bandit algorithm that is robust to adversarial attacks on the stochastic rewards. 
In contrast, our work shows that the existing stochastic algorithms are vulnerable to adversarial attacks. 
Note that their attack protocol is slightly different in that the attacker has to prepare attacks for all the arms before the learner chooses an arm. Furthermore, they have a different attack cost definition where the cost in a round is the \emph{largest} manipulation over the arms, regardless of which arm the learner selects afterwards.

Another concurrent work by~\citet{ma2018data} considers attacking stochastic contextual bandit algorithms. The authors show that for a contextual bandit algorithm which periodically updates the arm selection policy, an attacker can perform offline attack to force the contextual bandit algorithm to pull some pre-specified target arm for a given target context vector. Our work differs in that we consider online attack, which is performed on the fly rather than offline.

\section{Conclusions and Future Work}
\label{sec:con}

We presented a reward-manipulating attack on stochastic MABs.
We analyzed the attack against $\epsilon$-greedy and a generalization of the UCB algorithm, and proved that the attacker can force the algorithms to almost always pull a suboptimal target arm.
The cost of the attack is only logarithmic in time.
Given the wide use of MABs in practice, this is a significant security threat.

Our analysis is only the beginning.  We targeted $\epsilon$-greedy and UCB learners for their simplicity and popularity.
Future work may look into attacking Thompson sampling~\cite{thompson33onthelikelihood,agrawal12analysis}, linear bandits~\cite{abbasi11improved,agrawal13thompson}, and contextual bandits~\cite{li10contextual,agarwal14taming}, etc.
We assumed the reward attacks $\alpha_t$ are unbounded from above; new analysis is needed if an application's reward space is bounded or discrete.
It will also be useful to establish lower bounds on the cumulative attack cost.

Beyond the attack studied in this paper, there is a wide range of possible attacks on MABs. 
We may organize them along several dimensions:
\begin{compactitem}
\item{The attack goal: The attacker may force the learner into pulling or avoiding target arms, 
or worsen the learner's regret, or make the learner identify the wrong best-arm, etc.}
\item{The attack action: The attacker can manipulate the rewards or corrupt the context for contextual bandits, etc.}
\item{Online vs. offline: An online attacker must choose the attack action in real time; An offline attacker poisons a dataset of historical action-reward pairs in batch mode, then the learner learns from the poisoned dataset.}
\end{compactitem}

The combination of these attack dimensions presents fertile ground for future research into both bandit-algorithm attacks and the corresponding defense mechanisms. 


\subsubsection*{Acknowledgments}

This work is supported in part by NSF 1837132, 1545481, 1704117, 1623605, 1561512, and the MADLab AF Center of Excellence FA9550-18-1-0166.

\bibliography{attack-nips}
\bibliographystyle{icml2017_kwang}


%
%

\clearpage

\onecolumn
\begin{center}
\section*{Supplementary Material}
\end{center}

\renewcommand{\thesubsection}{\Alph{subsection}}

\subsection{Details on the oracle and constant attack}

\paragraph{Logarithmic regret and the suboptimal arm pull counts.}

For simplicity, denote by $i^*$ the \emph{unique} best arm; that is, $i^* = \arg \max_{i=1,\ldots,K} \mu_i$.
We show that a logarithmic regret bound implies that the arm pull count of arm $i \neq i^*$ is at most logarithmic in $T$.
\begin{lem}\label{lem:regret-and-pull-counts}
Assume that a bandit algorithm enjoys a regret bound of $O(\log(T))$.
Then, $\EE N_i(T) = O(\log(T))$, $ \forall i \neq  i^*$.
\end{lem}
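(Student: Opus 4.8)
The plan is to invoke the standard regret decomposition, which expresses the cumulative pseudo-regret as a nonnegative-weighted sum of the suboptimal-arm pull counts, and then to isolate a single arm. First I would introduce the suboptimality gap with respect to the best arm, $\Delta_i^* := \mu_{i^*} - \mu_i$, noting that $\Delta_i^* > 0$ for every $i \neq i^*$ precisely because $i^*$ is assumed to be the \emph{unique} maximizer of $\{\mu_j\}$. Writing the pseudo-regret as $\EE\sum_{t=1}^T (\mu_{i^*} - \mu_{I_t})$ and using linearity of expectation together with the fact that arm $j$ contributes $\Delta_j^*$ to the instantaneous regret exactly on the rounds where $I_t = j$, one obtains the identity
\[
\EE\sum_{t=1}^T (\mu_{i^*} - \mu_{I_t}) = \sum_{j \neq i^*} \Delta_j^* \, \EE N_j(T)\,.
\]

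Next, since every summand $\Delta_j^* \, \EE N_j(T)$ on the right-hand side is nonnegative, a single term is dominated by the whole sum. Hence, for any fixed $i \neq i^*$,
\[
\Delta_i^* \, \EE N_i(T) \le \sum_{j \neq i^*} \Delta_j^* \, \EE N_j(T) = \EE\sum_{t=1}^T (\mu_{i^*} - \mu_{I_t}) = O(\log T)\,,
\]
where the last equality is the hypothesized regret bound. Dividing through by the positive constant $\Delta_i^*$ gives $\EE N_i(T) \le O(\log T)/\Delta_i^* = O(\log T)$, which is the claim.

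There is essentially no serious obstacle here; this is a textbook manipulation. The only points requiring care are (i) invoking uniqueness of $i^*$ so that $\Delta_i^* > 0$ and division by it is legitimate, and (ii) observing that the constant absorbed into the final $O(\log T)$ scales like $1/\Delta_i^*$, which is harmless because the bandit instance — and therefore every gap $\Delta_i^*$ — is held fixed as $T \to \infty$.
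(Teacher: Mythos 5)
Your proof is correct and follows essentially the same route as the paper's: decompose the pseudo-regret as $\sum_{j\neq i^*}(\mu_{i^*}-\mu_j)\,\EE N_j(T)$, drop all nonnegative terms but one, and divide by the positive gap $\mu_{i^*}-\mu_i$. Your added care about the uniqueness of $i^*$ guaranteeing a strictly positive gap is a reasonable (if implicit in the paper) clarification, but the argument is otherwise identical.
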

\begin{proof}
The logarithmic regret bound implies that for a large enough $T$ there exists $C>0$ such that $\sum_{i=1}^{K} \EE N_i(T) (\mu_{i^*} - \mu_i) \le C \log T$.
Therefore, for any $i \ne i^*$, we have $\EE N_i(T) (\mu_{i^*}-\mu_i) \le C \log T$, which implies that
\[
\EE N_i(T) \le \frac{C}{\mu_{i^*}-\mu_i} \log T = O(\log T)\,.
\]
\end{proof}

\paragraph{Proof of Proposition~\ref{thm:oracle}}

By Lemma~\ref{lem:regret-and-pull-counts}, a logarithmic regret bound implies that the bandit algorithm satisfies $\EE N_i(T) = O(\log(T))$.
That is, for a large enough $T$, $\EE N_i(T) \le C_i \log(T)$ for some $C_i >0$.
Based on the view that the oracle attack effectively shifts the means $\mu_1,\cdots,\mu_K$, the best arm is now the $K$-th arm.
Then, $\EE N_K(T) = T - \sum_{i\neq K} \EE N_i(T) \ge T - \sum_{i\neq K} C_i \log T = T - o(T)$, which proves the first statement.

For the second statement, we notice that $\EE N_i(T) = C_i \log T$ for any $i \ne K$ and that we do not attack the $K$-th arm.
Therefore, 
\[
\EE\left[\sum_{t=1}^T |\alpha_t|\right] = \sum_{i=1}^{K-1} \EE N_i(T) \cdot \Delta^\eps_i \le \sum_{i=1}^{K-1} C_i \Delta^\eps_i \log T = O\left( \sum_{i=1}^{K-1} \Delta^\eps_i \log T \right)\,.
\]

\paragraph{Proof of Proposition~\ref{thm:constant}}

By Lemma~\ref{lem:regret-and-pull-counts}, a logarithmic regret bound implies that the bandit algorithm satisfies $\EE N_i(T) = O(\log(T))$.
Note that the constant attack effectively shifts the means of all the arms by $A'$ except for the $K$-th arm.
Since $A' > \max_i \Delta_i$, the best arm is now the $K$-th arm.
Then, $\EE N_K(T) = T - \sum_{i=1}^{K-1} \EE N_i(T) \ge T - \sum_{i=1}^{K-1} C_i \log T = T - o(T)$, which proves the first statement.

For the second statement, we notice that $\EE N_i(T) = C_i \log T$ for any $i \neq K$, and we do not attack the $K$-th arm.
Therefore, 
\[
\EE\left[\sum_{t=1}^T |\alpha_t|\right] = \sum_{i=1}^{K-1} \EE N_i(T) \cdot A' \le A' \sum_{i=1}^{K-1} C_i \log T = O(A' \cdot \log T)\,.
\]

\paragraph{The best $\eps$ for Alice's oracle attack}

Consider the case where Bob employs a near-optimal bandit algorithm such as UCB~\cite{auer02finite}, which enjoys $\EE N_i(T) = \Theta(1 + \Delta_i^{-2} \log T)$. 
When the time horizon $T$ is known ahead of time, one can compute the best $\eps$ ahead of time.
Hereafter, we omit unimportant constants for simplicity.
Since Alice employs the oracle attack, Bob pulls each arm $C + \eps^{-2} \log(T)$ times for some $C>0$ in expectation.
Assuming that the target arm is $K$, the attack cost is
\begin{align*}
  \sum_{i=1}^{K-1} \Delta^\eps_i \cdot (C + \eps^{-2} \log(T))
  = C \sum_{i=1}^{K-1} \Delta_i + (K-1)C\cdot \eps + \sum_{i=1}^{K-1} \lt(  \fr{\Delta_i}{\eps^2} + \fr{1}{\eps} \rt) \log T
\end{align*}
To balance the two terms, one can see that $\eps$ has to grow with $T$ and the term $\Delta_i/\eps^2$ is soon dominated by $1/\eps$.
Thus, for large enough $T$ the optimal choice of $\eps$ is $\sqrt{C \log(T)}$, which leads to the attack cost of $O(K\sqrt{C\log T})$.

\subsection{Details on attacking the \texorpdfstring{$\eps$}{}-greedy strategy }

\begin{lem}
  \label{lem:beta}
  For $\delta\le1/2$, the $\beta(N)$ defined in~\eqref{eq:beta} is monotonically decreasing in $N$. 
\end{lem}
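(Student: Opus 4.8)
The plan is to reduce monotonicity of $\beta$ to a sign computation on a single scalar function and then invoke the hypothesis $\delta \le 1/2$ to fix that sign. Since $x\mapsto\sqrt{x}$ is increasing and $2\sigma^2>0$, it suffices to show that the factor
\[
g(N) \defeq \frac{1}{N}\log\frac{\pi^2 K N^2}{3\delta}
\]
is decreasing in $N$. First I would extend $g$ to a differentiable function of a real argument $N \ge 1$ so that ordinary calculus applies.

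Writing $g(N) = N^{-1}\bigl(\log\tfrac{\pi^2 K}{3\delta} + 2\log N\bigr)$ and differentiating via the product/quotient rule gives
\[
g'(N) = \frac{2 - \log\frac{\pi^2 K N^2}{3\delta}}{N^2}\,,
\]
so that $g'(N)<0$ precisely when $\frac{\pi^2 K N^2}{3\delta} > e^2$. The entire statement therefore hinges on verifying this one inequality over the range of $N$ on which $\beta$ is actually evaluated.

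Since $\beta$ is only ever applied to pull counts $N \ge 1$, and $K \ge 2$ in any nontrivial bandit instance, I would lower-bound $\frac{\pi^2 K N^2}{3\delta} \ge \frac{\pi^2 K}{3\delta}$ using $N \ge 1$, and then use $\delta \le 1/2$ to obtain $\frac{\pi^2 K}{3\delta} \ge \frac{2\pi^2 K}{3} \ge \frac{4\pi^2}{3} > e^2$. This forces $g'<0$ throughout, so $g$, and hence $\beta$, is strictly decreasing. The only delicate point — the main obstacle — is the extreme case $K=1$, where the constant bound degrades to $\frac{2\pi^2}{3}\approx 6.58 < e^2$ and the continuous derivative can in fact be positive right at $N=1$. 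There I would abandon the derivative and instead compare consecutive integers $g(N)$ and $g(N+1)$ directly, observing that the continuous maximum of $g$ occurs at $N = e/\sqrt{\pi^2 K/(3\delta)} \le 1.1$, so the integer sequence is still monotone even though the real extension is not. The role of $\delta \le 1/2$ is exactly to push the argument of the logarithm past the threshold $e^2$ at which $g'$ changes sign.
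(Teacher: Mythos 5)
Your argument is essentially the paper's: differentiate the continuous extension $x\mapsto \frac{1}{x}\log\frac{\pi^2 K x^2}{3\delta}$, observe the derivative has the sign of $2-\log\frac{\pi^2 K x^2}{3\delta}$, and use $\delta\le 1/2$ (with $N\ge 1$) to force the logarithm past $2$. Your flagged edge case $K=1$ is a fair catch --- the paper's own chain $\delta\le 1/2\le \frac{K}{3}(\pi/e)^2$ likewise fails there --- but it is moot since the attack setting has a target arm distinct from an optimal arm, so $K\ge 2$ throughout.
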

\begin{proof}
  It suffices to show that $f(x)=\frac{2\sigma^2}{x}\log \frac{\pi^2Kx^2}{3\delta}$ is decreasing for $x\ge 1$. Note that $\delta\le 1/2 \le \frac{K}{3}(\frac{\pi}{e})^2$, thus for $x\ge1$ we have
  \begin{eqnarray*}
    f^\prime(x)&=& -\frac{2\sigma^2}{x^2}\log \frac{\pi^2Kx^2}{3\delta}+\frac{2\sigma^2}{x}\frac{3\delta}{\pi^2Kx^2}\frac{2\pi^2Kx}{3\delta} \nonumber \\
    &=&\frac{2\sigma^2}{x^2}(2-\log \frac{\pi^2Kx^2}{3\delta})\le \frac{2\sigma^2}{x^2}(2-\log \frac{\pi^2K}{3\delta}) \nonumber \\
    &\le& \frac{2\sigma^2}{x^2}(2-\log e^2)=0.
  \end{eqnarray*}
\end{proof}
\vspace{-1em}
\paragraph{Proof of Corollary~\ref{cor:sublinear}}
When $T$ is larger than the following threshold:
$$\fr{K+1}{K}(\sum_{t=1}^T \eps_t) + \sqrt{12 \log(K/\dt)(\fr{K+1}{K}\sum_{t=1}^T \eps_t)},$$
we have $\tilN_K(T) \ge \tilN (T)$.
Because $\beta(N)$ is decreasing in $N$,
\begin{equation}
\tilN(T) \beta(\tilN(T)) + 3 \tilN(T) \beta(\tilN_K(T)) \le 4 \tilN(T) \beta(\tilN(T)).
\label{eq:corupper1}
\end{equation}
Due to the the exploration scheme of the strategy,
$$\sum_{t=1}^T \epsilon_t = cK \sum_{t=1}^T 1/t \le cK(\log(T)+1).$$
Thus by the definition of $\tilde N(T)$,
\begin{eqnarray}
\tilde N(T) &\le& c(\log T+1) + \sqrt{3 \log\left(\frac{K}{\delta}\right)c(\log T + 1)}. \nonumber 
\end{eqnarray}
For sufficiently large $T$, there exists a constant $c_2$ depending on $c, K, \delta$ to further upper bound the RHS as follows:
\begin{equation}
c(\log T+1) + \sqrt{3 \log\left(\frac{K}{\delta}\right)c(\log T + 1)} \le c_2 \log T \defeq \breve N(T).
\label{eq:corupper2}
\end{equation}
Since $N \beta(N)$ is increasing in $N$, combining~\eqref{eq:corupper1} and~\eqref{eq:corupper2}
we have for sufficiently large $T$,
$$\tilN(T) \beta(\tilN(T)) + 3 \tilN(T) \beta(\tilN_K(T)) \le 4 \breve N(T) \beta(\breve N(T)).$$
Plugging this upper bound into Theorem~\ref{thm:egreedy-main},
\begin{eqnarray}
\lefteqn{\sum_{t=1}^T \alpha_t < \lt(\sum_{i=1}^K \Delta_i \rt) \breve N(T) + 4(K-1) \breve N(T) \beta(\breve N(T))} \nonumber\\
&=& c_2 \lt(\sum_{i=1}^K \Delta_i \rt) \log T + \sqrt{32 c_2} (K-1) \sigma \cdot \sqrt{ \log T \lt(2\log\log T + \log\frac{\pi^2 K c_2^2}{3 \delta} \rt) }.
\end{eqnarray}

\paragraph{Proof of Lemma~\ref{lem:band}}
Let $\{X_j\}_{j=1}^\infty$ be a sequence of \emph{i.i.d.} $\sigma^2$-sub-Gaussian random variables with mean $\mu$.
Let $\hat\mu^0_N = \frac{1}{N}\sum_{j=1}^N X_j$. 
By Hoeffding's inequality
\begin{equation*}
\PP(|\hat\mu^0_N - \mu|\ge \eta) \le 2 \exp\left(-\frac{N\eta^2}{2\sigma^2}\right).
\end{equation*}
Define $\delta_{iN} \defeq \frac{6\delta}{\pi^2 N^2 K}$.
Apply union bound over arms $i$ and pull counts $N \in \mathbb N$, 
\begin{align*}
\PP\left(\exists i, N: |\hat\mu^0_{i,N} - \mu_i|\ge \beta(N) 
\right) \le \sum_{i=1}^K \sum_{N=1}^\infty \delta_{iN}  = \delta. 
\end{align*}

\paragraph{Proof of Lemma~\ref{lem:alwaysK}}

We show by induction that at the end of any round $t \ge K$ Algorithm~\ref{alg:attackeps} maintains the invariance
\begin{align}
\hat\mu_{K}(t) > \hat\mu_{i}(t),\quad \forall i<K,
\label{eq:invariance}
\end{align}
which forces the learner to pull arm $K$ if $t+1$ is an exploitation round.

Base case: By definition the learner pulls arm $K$ first, then all the other arms once.
During round $t=2 \ldots K$ the attack algorithm ensures 
$\hat\mu_{i}(t) \le \hat\mu_K(t) - 2 \beta(1)<\hat\mu_K(t)$ for arms $i<K$, trivially satisfying~\eqref{eq:invariance}.

Induction: Suppose~\eqref{eq:invariance} is true for rounds up to $t-1$.
Consider two cases for round $t$: 

If round $t$ is an exploration round and $I_t \neq K$ is pulled, then only $\hat\mu_{I_t}(t)$ changes; the other arms copy their empirical mean from round $t-1$.
The attack algorithm ensures $\hat\mu_{K}(t) \ge \hat\mu_{I_t}(t) + 2 \beta(N_{K}(t)) > \hat\mu_{I_t}(t)$.
Thus~\eqref{eq:invariance} is satisfied at $t$.

Otherwise either $t$ is exploration and $K$ is pulled; or $t$ is exploitation -- in which case $K$ is pulled because by inductive assumption~\eqref{eq:invariance} is satisfied at the end of $t-1$.
Regardless, this arm $K$ pull is not attacked by Algorithm~\ref{alg:attackeps} and its empirical mean is updated by the pre-attack reward. 
We show this update does not affect the dominance of $\hat\mu_{K}(t)$.
Consider any non-target arm $i<K$.
Denote the last time $\hat\mu_i$ was changed by $t'$.
Note $t'<t$ and $N_{K}(t')<N_{K}(t)$.
At round $t'$, Algorithm~\ref{alg:attackeps} ensured that 
$\hat\mu_{i}(t') \le \hat\mu_{K}(t') - 2 \beta(N_{K}(t'))$.
We have:
\begin{align*}
  \hat\mu_{K}(t) \nonumber
  &= \hat\mu^0_{K}(t) \quad & \mbox{(arm $K$ never attacked)} \nonumber \\
  &> \mu^0_{K} - \beta(N_{K}(t)) \quad & \mbox{(\eqref{eq:band} lower bound)} \nonumber \\
  &> \mu^0_{K} - \beta(N_{K}(t')) \quad & \mbox{(\lemref{lem:beta})} \nonumber \\
  &> \hat\mu_{K}(t') - 2\beta(N_{K}(t')) \quad & \mbox{(\eqref{eq:band} upper bound)} \nonumber \\
  &\ge \hat\mu_{i}(t') \quad & \mbox{(Algorithm~\ref{alg:attackeps})} \nonumber \\
  &= \hat\mu_{i}(t)\,. & 
\end{align*}
Thus~\eqref{eq:invariance} is also satisfied at round $t$.

\paragraph{Proof of Lemma~\ref{lem:alphaupperbound}}

Without loss of generality assume in round $t$ arm $i$ is pulled and the attacker needed to attack the reward (i.e. $I_t=i$ and $\alpha_t>0$).
By definition~\eqref{eq:alpha},
{\small
  \begin{eqnarray}
  \alpha_t &=& \hat\mu_{i}(t-1)N_{i}(t-1) + r_t^0 - \left(\hat\mu_{K}(t) - 2 \beta(N_{K}(t))\right) N_{i}(t)  \nonumber\\
  &=& \sum_{s\in \tau_i(t-1)} (r_s^0 - \alpha_s)  + r_t^0 - \left(\hat\mu_{K}(t) - 2 \beta(N_{K}(t))\right) N_{i}(t)  \nonumber\\
  &=& \sum_{s\in \tau_i(t)} r_s^0 -  \sum_{s\in \tau_i(t-1)} \alpha_s   - \left(\hat\mu_{K}(t) - 2 \beta(N_{K}(t))\right) N_{i}(t). \nonumber 
  \end{eqnarray}
}
Therefore, the cumulative attack on arm $i$ is
\begin{eqnarray}
\sum_{s\in \tau_i(t)} \alpha_s &=& \sum_{s\in \tau_i(t)} r_s^0 - \left(\hat\mu_{K}(t) - 2 \beta(N_{K}(t))\right) N_{i}(t) \nonumber\\  
&=& \left(\hat\mu_i^0(t) - \hat\mu_{K}(t) + 2 \beta(N_{K}(t))\right) N_{i}(t).\nonumber 
\end{eqnarray}
One can think of the term in front of $N_i(t)$ as the amortized attack cost against arm $i$.
By \lemref{lem:band}, 
\begin{eqnarray*}
  \hat\mu_i^0(t) &<& \mu_i + \beta(N_i(t)) \\
  \hat\mu_{K}(t) = \hat\mu^0_{K}(t) &>& \mu_{K} - \beta(N_{K}(t)) 
\end{eqnarray*}
Therefore,
\begin{eqnarray*}
  \sum_{s:I_s=i}^{t} \alpha_s &<& \left(\mu_i - \mu_{K} + \beta(N_i(t)) + 3 \beta(N_{K}(t))\right) N_{i}(t)\nonumber \\
  &\le& \left(\Delta_i + \beta(N_i(t)) + 3 \beta(N_{K}(t))\right) N_{i}(t).
  \label{eq:tmp1}
\end{eqnarray*}
The last inequality follows from the gap definition $\Delta_i := [\mu_i - \mu_{K}]_+$.

\paragraph{Proof of Lemma~\ref{lem:Nitupperbound}}

Fix a non-target arm $i<K$. 
Let $X_t$ be the Bernoulli random variable for round $T$ being arm $i$ pulled.  
Then,
\begin{eqnarray*}
  N_i(T) &=& \sum_{t=1}^T X_t \\
  \EE[X_t] &=& \frac{\eps_t}{K} \\
  \VV[X_t] &=& \frac{\eps_t}{K}(1-\frac{\eps_t}{K}) < \frac{\eps_t}{K}\,.
\end{eqnarray*}

Since $X_t$'s are independent random variables, we may apply Lemma~9 of \citet{agarwal14taming}, so that for any $\lambda\in[0,1]$, with probability at least $1-\delta/K$,
\begin{eqnarray*}
  \sum_{t=1}^T (X_t - \frac{\eps_t}{K}) 
  &\le& (e-2)\lambda\sum_{t=1}^T\VV[X_t] +\frac{1}{\lambda}\log\frac{K}{\delta}\\
  &<& (e-2)\lambda\sum_{t=1}^T \EE[X_t] + \frac{1}{\lambda}\log\frac{K}{\delta}\,.
\end{eqnarray*}
Choose $\lambda=\sqrt{\frac{\log(K/\delta)}{(e-2)\sum_{t=1}^T\EE[X_t]}}$, and we get that
\begin{eqnarray*}
  \sum_{t=1}^T X_t &<& \sum_{t=1}^T \frac{\eps_t}{K} + 2\sqrt{(e-2)\sum_{t=1}^T\frac{\eps_t}{K}\log\frac{K}{\delta}} \\
  &<& \sum_{t=1}^T \frac{\eps_t}{K} + \sqrt{3\sum_{t=1}^T\frac{\eps_t}{K}\log\frac{K}{\delta}} \defeq \tilde N(T)\,.
\end{eqnarray*}
The same reasoning can be applied to all non-target arm $i<K$.  
\footnote{Note the upper bound above is valid for $T$ such that $\sum_{t=1}^T \eps_t \ge \fr{K}{e-2}\log(K/\dt)$ only as otherwise $\lambda$ is greater than 1.
  One can get rid of such a condition by a slightly looser bound.
  Specifically, using $\lambda=1$ gives us a bound that holds true for all $T$.
  We then take the max of the two bounds, which can be simplified as
  $
  \sum_{t=1}^T X_t
  < (e-1)\sum_{t=1}^T \frac{\epsilon_t}{K} + \sqrt{3\sum_{t=1}^T\frac{\epsilon_t}{K}\log\frac{K}{\delta}} + \log\fr{K}{\dt} \;.
  $
  The condition on $T$ in Theorem~\ref{thm:egreedy-main} can be removed using this bound.
  However, by keeping the mild assumption on $T$ we keep the exposition simple.
}

The case with the target arm is similar, with the only change that $\EE[X_t] > 1-\epsilon_t$ and $\VV[X_t] < \epsilon_t$, leading to the lower bound:
\[
N_K(T) > T - \sum_{t=1}^T\epsilon_t - \sqrt{3\sum_{t=1}^T\epsilon_t\log\frac{K}{\delta}} =: \tilde N_K(T)\,.
\]
Finally, a union bound is applied to all $K$ arms to complete the proof.

\subsection{Details on attacking the UCB strategy}

\paragraph{Proof of \lemref{lem:ni-upper-bound}}

Fix some $t\ge2K$.  If $N_i(t) \le 2$ for all $i<K$, then $N_K(t) \ge 2$, which implies $N_i(t) \le \min\{N_K(t), 2\}$. Thus, \eqref{eqn:ni-ub} holds trivially and we are done.

Now fix any $i<K$ such that $N_i(t)>2$.  
As the desired upper bound is nondecreasing in $t$, we only need to prove the result for $t$ where $I_t=i$.  
Let $t'$ be the previous time where arm $i$ was pulled.  
Note that $t'$ satisfies $K < t' < t$ as $N_i(t)>2$, so the attacker has started attacking at round $t'$.
This implies that $N_i(t'-1) + 1 = N_i(t') = N_i(t-1) = N_i(t) - 1$.

On one hand, it is clear that after attack $\alpha_{t'}$ was added at round $t'$, the following holds:
\begin{equation}
\hmu_i(t') \le \hmu_K(t') - 2 \beta(N_K(t'))-\Delta_0\,. \label{eqn:after_tp}
\end{equation}

On the other hand, at round $t$, it must be the case that
\[
\hmu_i(t-1) + 3\sigma\sqrt{\frac{\log t}{N_i(t-1)}} \ge \hmu_K(t-1) + 3\sigma\sqrt{\frac{\log t}{N_K(t-1)}}\,,
\]
which is equivalent to
\[
\hmu_i(t') + 3\sigma\sqrt{\frac{\log t}{N_i(t')}} \ge \hmu_K(t-1) + 3\sigma\sqrt{\frac{\log t}{N_K(t-1)}}\,.
\]
Therefore,
\begin{eqnarray*}
\lefteqn{3\sigma\sqrt{\frac{\log t}{N_i(t')}} - 3\sigma\sqrt{\fr{\log t}{N_K(t-1)}} \ge \hmu_K(t-1) - \hmu_i(t')} \\
&\ge& \hmu_K(t-1) - \hmu_K(t') + 2 \beta(N_K(t')) + \Delta_0 \\
&\ge& \Delta_0\,,
\end{eqnarray*}
where we have used \eqnref{eqn:after_tp} in the second inequality, the condition in event $E$ as well as \lemref{lem:beta} in the third.  
Since $\Delta_0 > 0$, we can see that $N_i(t') < N_K(t-1)$, and thus
\begin{equation}
N_i(t) = N_i(t') + 1 \le N_K(t-1) = N_K(t)\,. \label{eqn:ni-ub1}
\end{equation}

Furthermore, since $3\sigma\sqrt{\fr{\log t}{N_K(t-1)}} > 0$, we have $3\sigma\sqrt{\frac{\log t}{N_i(t')}} > \Delta_0$, which implies
\begin{equation}
N_i(t) = 1 + N_i(t') \le 1 + \frac{9\sigma^2}{\Delta_0^2}\log t\,. \label{eqn:ni-ub2}
\end{equation}
Combining \eqref{eqn:ni-ub1} and \eqref{eqn:ni-ub2} gives the desired bound \eqref{eqn:ni-ub}.

\paragraph{Proof of \lemref{lem:total-cost}}

Fix any $i<K$.  As the desired upper bound is increasing in $t$, we only need to prove the result for $t$ where $I_t=i$ and $\alpha_t>0$.
%
It follows from \eqref{eqn:ucb-attack} that, 
\begin{eqnarray*}
\frac{1}{N_i(t)} \sum_{s\in \tau_i(t)} \alpha_s &=& \hmu^0_i(t) - \hmu_K(t-1) + 2 \beta(N_K(t-1)) + \Delta_0\,.
\end{eqnarray*}
Since event $E$ holds, we have
\begin{eqnarray*}
\frac{1}{N_i(t)} \sum_{s\in \tau_i(t)} \alpha_s \le \Delta_i + \Delta_0 + \beta(N_i(t)) + 3 \beta(N_K(t-1))\,.
\end{eqnarray*}
The proof is completed by observing $N_K(t-1)=N_K(t)$, $N_i(t) \le N_K(t)$ (\lemref{lem:ni-upper-bound}) and \lemref{lem:beta}.

\subsection{Simulations on Heuristic Constant Attack}
\label{sup:constant}
We run simulations on $\epsilon$-greedy and UCB to illustrate the heuristic constant attack algorithm. 
The bandit has two arms, where the reward distributions are $\mathcal{N}(1, 0.1^2)$ and $\mathcal{N}(0, 0.1^2)$ respectively, thus $\max_i \Delta_i = \mu_1 -\mu_2 =1$. 
Alice's target arm is arm 2. In our experiment, Alice tried two different constants for attack: $A = 1.2$ and $A= 0.8$, one being greater and the other being smaller than $\max_i \Delta_i$. 
We run the attack for $T = 10^4$ rounds. Fig.~\ref{cattack:eps} and Fig.~\ref{cattack:UCB} show Alice's cumulative attack cost and Bob's number of target arm pulls $N_K(t)$ for $\epsilon$-greedy and UCB. 
Note that if $A>\max_i \Delta_i$, then $N_K(t)\approx t$, which verifies that Alice succeeds with the heuristic constant attack.
At the same time, pushing up the target arm would incur linear cost; while dragging down the non-target arm achieves logarithmic cost.
In summary, Alice should use an $A$ value larger than $\Delta$, and should drag down the expected reward of the non-target arm by amount $A$.
\begin{figure}[H]
  \centering
  \subfloat[push up the target arm: $\alpha_t = \one\{I_t = 2\}\cdot(-A)$]{
    \includegraphics[width=0.25\textwidth,height=0.2\textwidth]{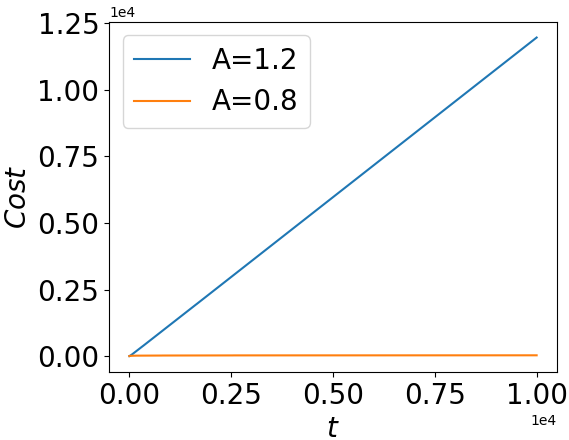}
    \includegraphics[width=0.25\textwidth,height=0.2\textwidth]{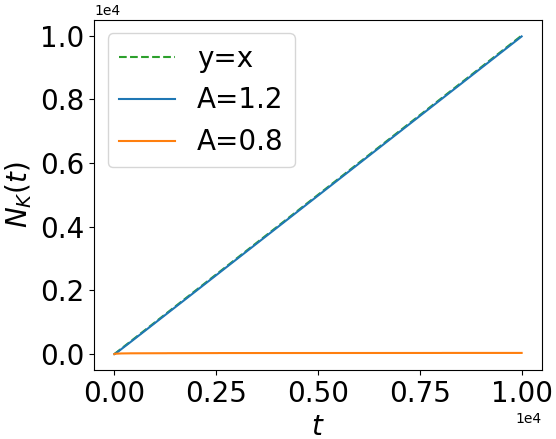}
  }
  \subfloat[drag down the non-target arm: $\alpha_t = \one\{ I_t \neq 2 \} \cdot A$]{
    \includegraphics[width=0.25\textwidth,height=0.2\textwidth]{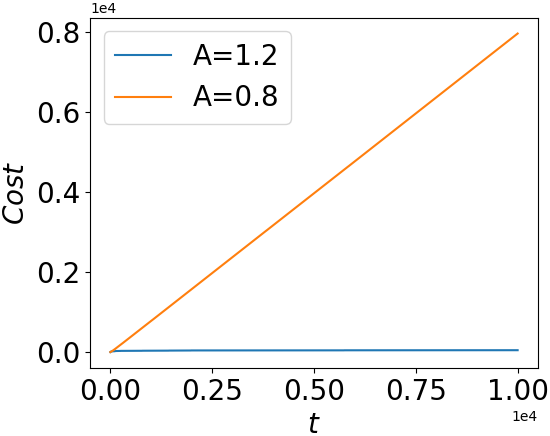}
    \includegraphics[width=0.25\textwidth,height=0.2\textwidth]{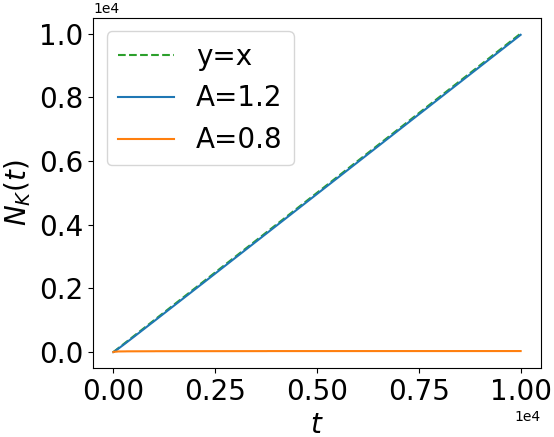}
  }
  \caption{Constant attack on $\epsilon$-greedy}
  \label{cattack:eps}
\end{figure}

\begin{figure}[H]
  \centering
  \subfloat[push up the target arm]{
    \includegraphics[width=0.25\textwidth,height=0.2\textwidth]{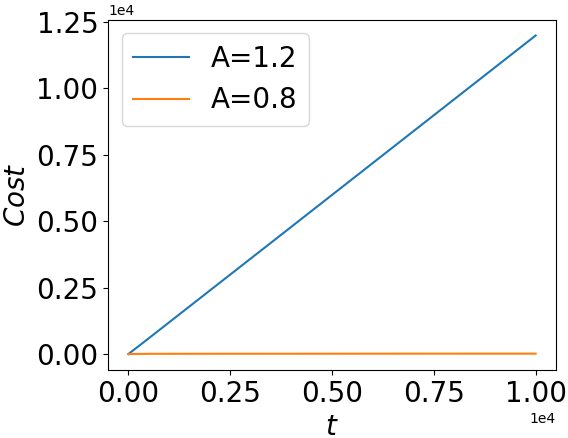}
    \includegraphics[width=0.25\textwidth,height=0.2\textwidth]{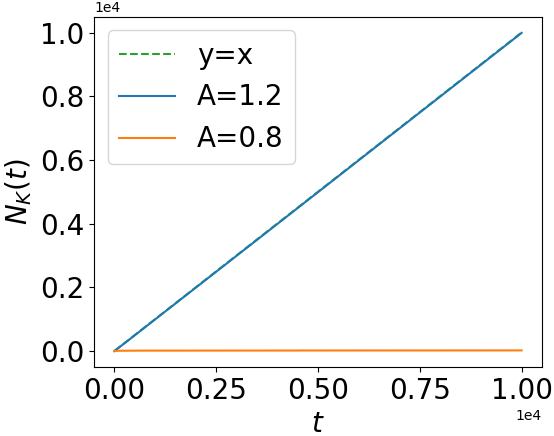}
  }
  \subfloat[drag down the non-target arm]{
    \includegraphics[width=0.25\textwidth,height=0.2\textwidth]{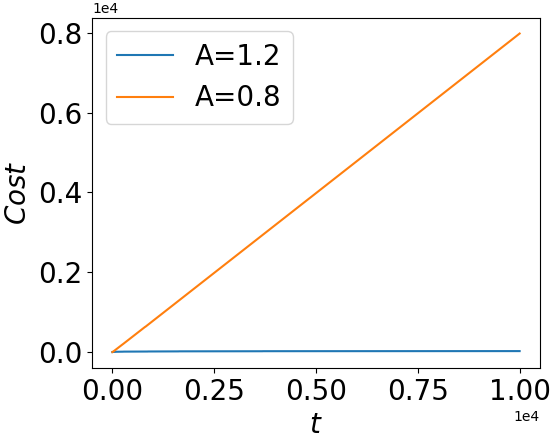}
    \includegraphics[width=0.25\textwidth,height=0.2\textwidth]{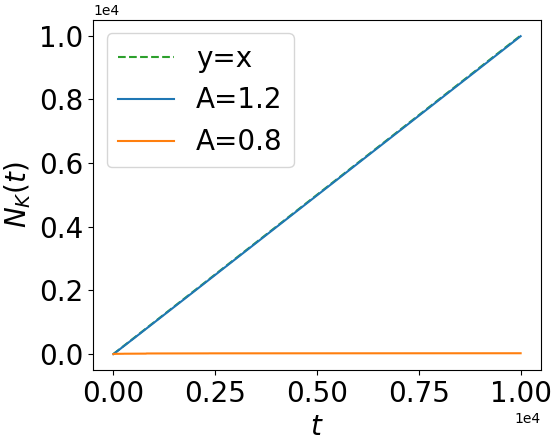}
  }
  \caption{Constant attack on UCB1}
  \label{cattack:UCB}
\end{figure}

\end{document}